%% file: main.tex
\documentclass{article} %
\usepackage{iclr2027_conference,times}

\input{math_commands.tex}

\usepackage{amsmath,amssymb,amsthm}
\usepackage{booktabs}
\usepackage{array}
\usepackage{multirow}
\usepackage{graphicx}
\usepackage{microtype}
\usepackage{xcolor}
\usepackage{tikz}
\usepackage{pgfplots}
\usepackage{makecell}
\pgfplotsset{compat=1.18}
\usetikzlibrary{arrows.meta,positioning,fit,backgrounds,patterns,decorations.pathreplacing,calc}
\usepackage{hyperref}
\usepackage{url}
\usepackage{placeins}
\hypersetup{hidelinks}

\theoremstyle{definition}
\newtheorem{proposition}{Proposition}
\newtheorem{corollary}{Corollary}

\definecolor{cOracle}{HTML}{222222}
\definecolor{cPKL}{HTML}{1F6FB4}
\definecolor{cTIP}{HTML}{E8762C}
\definecolor{cRisk}{HTML}{C0392B}
\definecolor{cDE}{HTML}{2E8B57}
\definecolor{cRand}{HTML}{8A8A8A}
\definecolor{cHelp}{HTML}{2E8B57}
\definecolor{cHarm}{HTML}{C0392B}
\definecolor{cPurple}{HTML}{7B4EA3}
\definecolor{cBrown}{HTML}{8C6D46}
\newcommand{\dumb}[4]{%
  \draw[#4, line width=1.1pt, -{Latex[length=1.6mm]}, shorten >=1.6pt] (axis cs:#2,#1) -- (axis cs:#3,#1);
  \fill[white] (axis cs:#2,#1) circle (2.0pt); \draw[#4, line width=0.9pt] (axis cs:#2,#1) circle (2.0pt);
  \fill[#4] (axis cs:#3,#1) circle (2.0pt);}
\newcommand{\dumbinf}[3]{%
  \fill[white] (axis cs:#2,#1) circle (2.0pt); \draw[#3, line width=0.9pt] (axis cs:#2,#1) circle (2.0pt);
  \node[#3, font=\scriptsize, anchor=west, inner sep=1pt, xshift=2.5pt] at (axis cs:#2,#1) {inf.};}

\newcommand{\bench}{DEEP}
\newcommand{\ndg}{\mathrm{nDG}}

\title{The Decision Value of Perception Compute}

\author{Hoang Pham Cong \& Ho Viet Duc Luong \\
Hanoi University of Science and Technology (HUST), Hanoi, Vietnam}
\iclrfinalcopy
\hypersetup{pdftitle={The Decision Value of Perception Compute},pdfauthor={Hoang Pham Cong, Ho Viet Duc Luong}}

\begin{document}
\maketitle
\lhead{}\renewcommand{\headrulewidth}{0pt}\thispagestyle{fancy}

\begin{abstract}
Adaptive perception spends extra computation on inputs where perception is expected to improve. When perception feeds a downstream decision system, a better perception output need not produce a better decision. We define the \emph{decision value} of perception compute as the change in downstream loss from escalating an input from a cheap to an expensive perception mode. Because this value can be negative, the allocation of perception compute should be judged against a budget-constrained decision oracle, with uniform full-fidelity inference as a baseline rather than an upper bound. We introduce \bench{} (\textbf{Decision Evaluation for Escalated Perception}), a benchmark that scores pre-escalation allocators against this oracle under selection, latency and energy budgets, charging each allocator for its own computation. With deployed monocular geometry on KITTI and nuScenes, we find that 34--54\% of the escalations that change downstream loss make it worse; harmful escalations also occur for the published PDM-Closed planner, evaluated open-loop on nuPlan with real detector outcomes. On nuScenes, perception-level gain frequently disagrees in sign with decision value. This mismatch has practical consequences: choosing among fixed deployable signals by missed-object perception gain rather than by decision value reduces realized test decision gain by 7.4\% of the all-cheap loss on average. Learned allocators recover part of the oracle's value by finding beneficial escalations but select nearly as much harm as random, and once their own computation is charged at a 20\% latency budget, only the lightweight routers, at about 3.5\% of a full detector pass, still beat random.
\end{abstract}

\section{Introduction}
Adaptive perception is motivated by a simple trade-off: more computation can improve perception, but it need not be spent on every input.
Existing methods therefore try to identify where additional computation is worthwhile, using signals such as uncertainty, difficulty, criticality, or expected perception gain \citep{weiss2013dynamic,heo2022rtscale,kang2022dnnsam,liu2022selfcueing}.
Selective routing between weak and strong detectors takes this idea further, sending an input to the expensive model only when it is expected to improve prediction quality, while recognizing that the stronger model need not be better on every input \citep{qiu2024edge,geng2026budget}.
In these approaches, the value of additional perception compute is evaluated through the perception improvement it provides.

This criterion is incomplete when perception feeds a downstream decision system.
A more expensive detector may recover a missed object without changing the downstream action, recover one that changes a safety-critical action, or add a false positive that triggers an unnecessary response.
Figure~\ref{fig:concept} shows two nuScenes frames in which the full mode recovers the same kind of missed object, a bus: the braking controller decelerates correctly in one frame and brakes hard for a bus outside its path in the other.
What ultimately determines whether additional compute is worthwhile is therefore its effect on the downstream decision, for which perception gain is only an indirect signal.
We call this the \emph{decision value} of perception compute: the change in downstream loss from escalating an input from a cheap to an expensive perception mode, positive when the loss decreases.

Such cases are not isolated. Across KITTI \citep{geiger2012kitti} and nuScenes \citep{caesar2020nuscenes}, a substantial fraction of the escalations that change downstream loss make it worse, also for a published planner, and different downstream systems can assign opposite values to the same perception change. Decision value is therefore not determined by perception improvement alone, nor is it intrinsic to the perception model. Because it can be negative, uniform full-fidelity inference, which absorbs harmful escalations together with beneficial ones, is a practical baseline when the budget admits it, not an upper bound on allocation; the reference for allocation quality is a \emph{budget-constrained decision oracle} that spends the budget on the most valuable escalations and skips harmful ones. This oracle is not deployable, because decision value is known only after both modes have run and their decisions have been evaluated against the scene: a deployable allocator must decide before escalation from the input, the cheap output and available context, and pay for whatever computation it uses.

We introduce \bench{} (\textbf{Decision Evaluation for Escalated Perception}), a benchmark for this pre-escalation allocation problem.
\bench{} scores allocators by the decision value they realize relative to the decision oracle under selection, latency, and energy budgets, while charging each allocator for its own computation.
Its core track uses detection-based driving pipelines on KITTI and nuScenes, and its external track uses published planners on nuPlan. This paper makes three contributions.
\begin{itemize}
\itemsep0pt
\item 
We define the decision value of perception compute and, because it is signed, measure allocation quality against a budget-constrained decision oracle, with uniform full fidelity as a baseline rather than an upper bound.

\item
We introduce \bench{}, which evaluates pre-escalation allocation against the decision oracle under selection and measured compute budgets, including the allocator's own computation cost. Because perception outputs, decision values and costs are cached, evaluating a new allocator requires only its pre-escalation scores and, for measured budgets, its own measured cost, not rerunning the perception models.

\item
We find that decision value is sign-varying, consumer-dependent and not determined by perception-level improvement alone, that ranking allocators by perception gain can select worse ones, and that learned allocators capture part of the beneficial value but retain their advantage under measured budgets only at low overhead relative to a full detector pass.
\end{itemize}

\begin{figure}[!t]
\centering
\setlength{\tabcolsep}{1.5pt}
\renewcommand{\arraystretch}{0.55}
\begin{tabular}{@{}cc@{\hspace{7pt}}cc@{}}
\scriptsize cheap, 320\,px & \scriptsize full, 640\,px &
\scriptsize cheap, 320\,px & \scriptsize full, 640\,px \\
\includegraphics[width=0.228\linewidth]{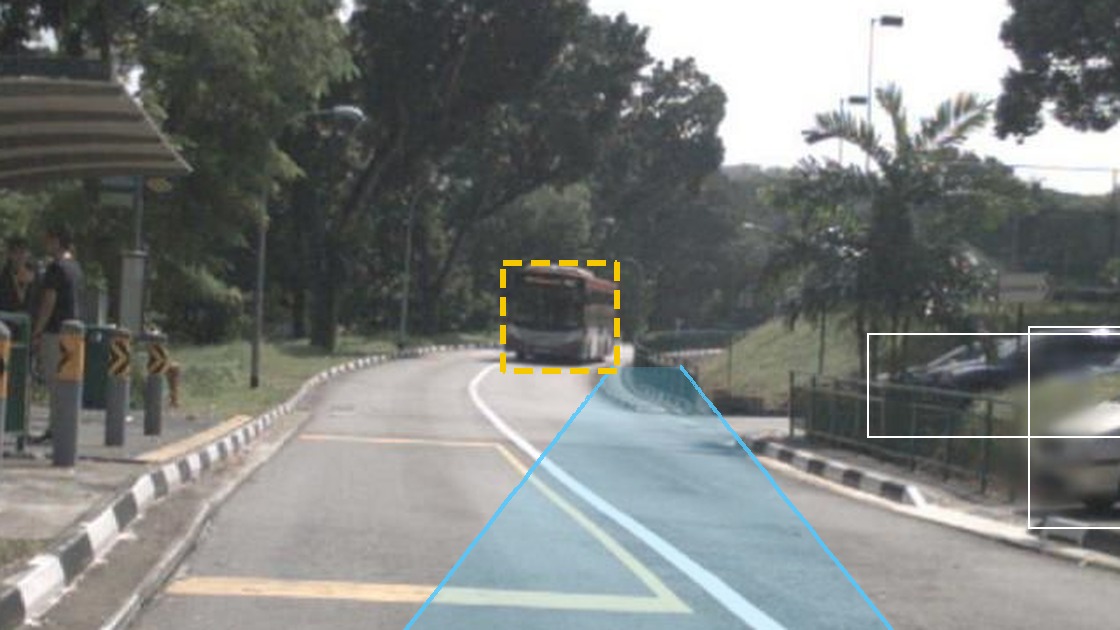} &
\includegraphics[width=0.228\linewidth]{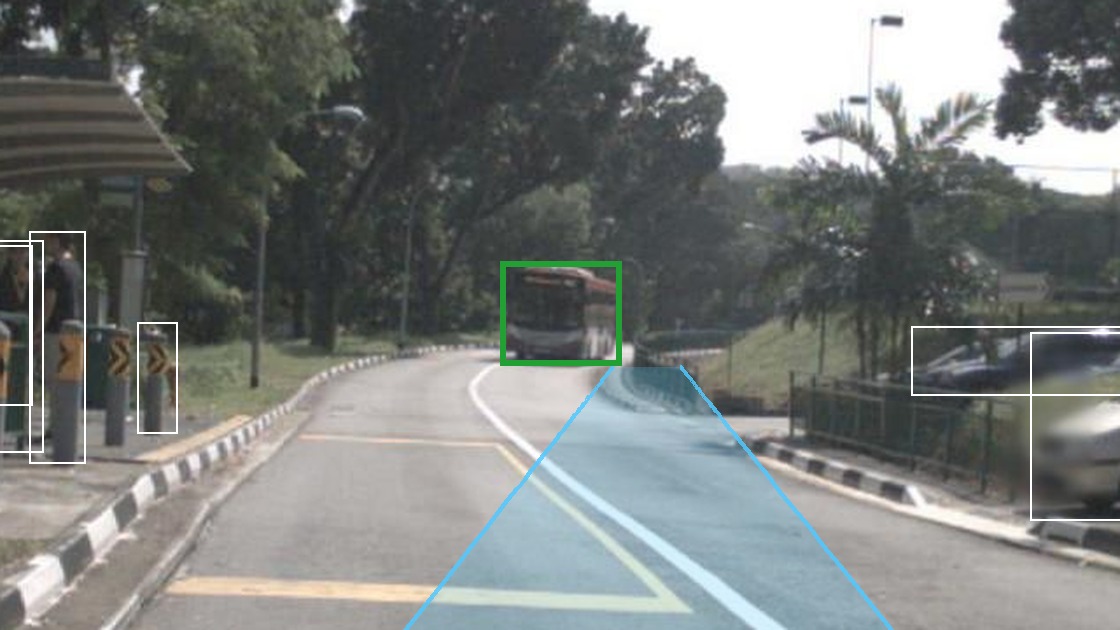} &
\includegraphics[width=0.228\linewidth]{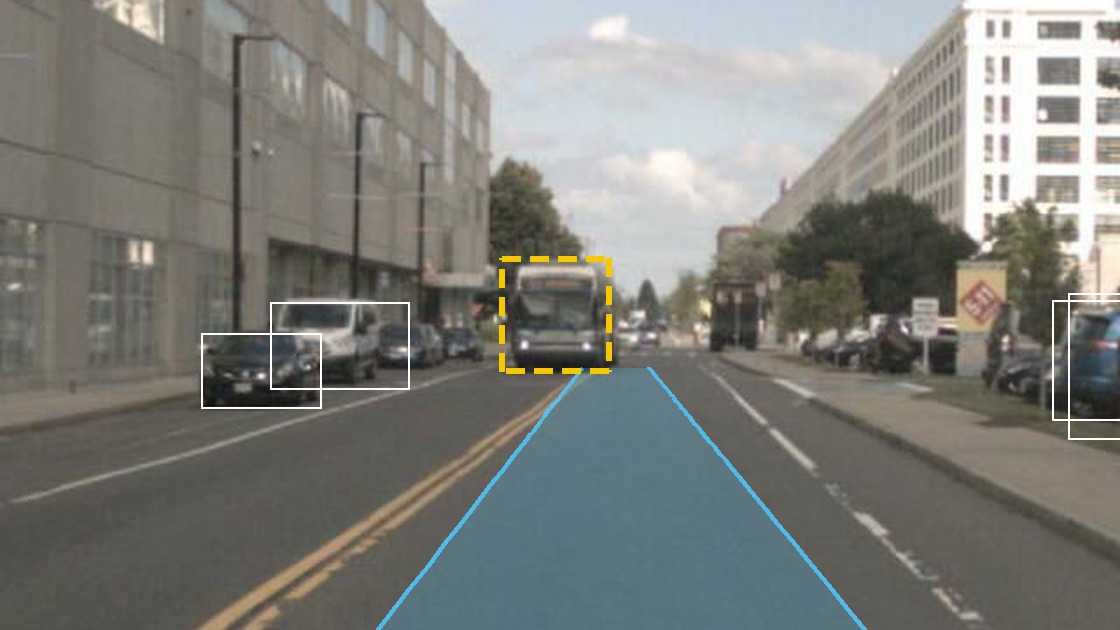} &
\includegraphics[width=0.228\linewidth]{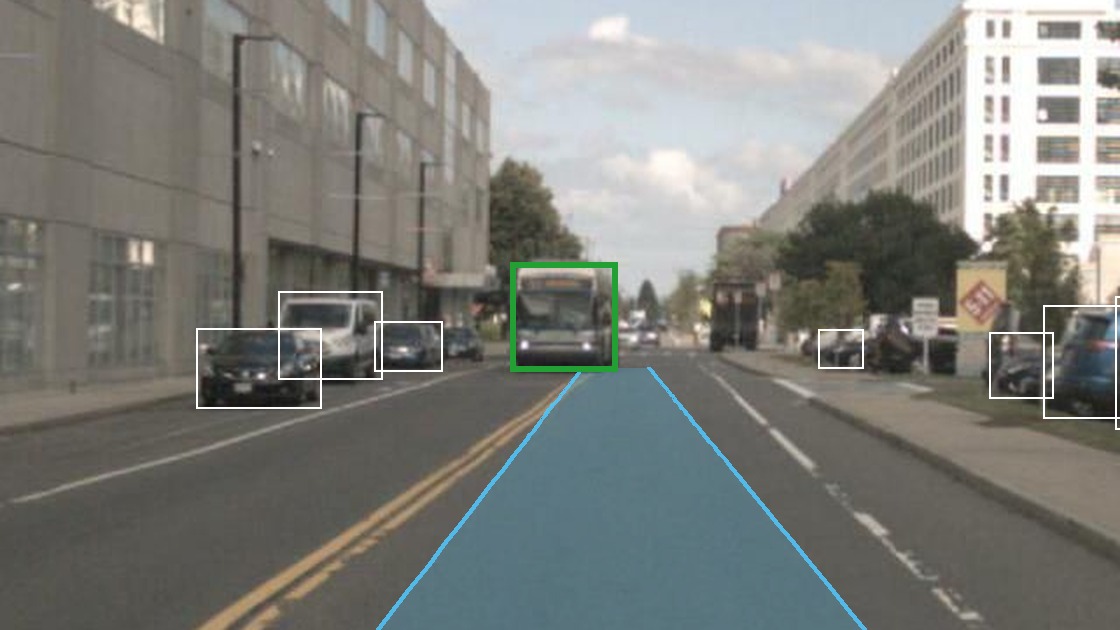} \\
\multicolumn{2}{c}{\scriptsize\color{cHarm}\textbf{(a)} escalation harms: keep speed $\to$ hard brake} &
\multicolumn{2}{c}{\scriptsize\color{cHelp}\textbf{(b)} escalation helps: keep speed $\to$ decelerate} \\
\end{tabular}
\caption{\textbf{Better detection need not improve the decision.}
Two nuScenes frames under the same braking controller and monocular geometry. In both, the full mode recovers a bus that the cheap mode misses (green box; dashed yellow: the missed reference object); the shaded band is the controller's driving corridor on the road.
\textbf{(a)} The recovered bus lies outside the corridor, but the monocular lift places it inside, and the controller brakes hard.
\textbf{(b)} The recovered bus lies inside the corridor, and the controller decelerates.
Both improve detection, yet their decision values have opposite signs.}
\label{fig:concept}
\end{figure}
\section{Related Work}
\paragraph{Adaptive inference and task-aware computation.}
Adaptive inference varies computation across inputs. Dynamic model selection trades accuracy against computation per input \citep{weiss2013dynamic}, and real-time detectors adapt image scale to each image's sensitivity to downscaling, prioritize safety-critical regions \citep{heo2022rtscale,kang2022dnnsam,liu2022selfcueing}, or route LiDAR frames through shallow or deep processing by scene complexity \citep{kim2026thinkasneeded}. Robot policies likewise condition computation on the task state \citep{spvla2026,actionaware2026,dasip2025}. These methods are evaluated by retained perception accuracy or task success at reduced cost; \bench{} instead values an escalation by its marginal effect on the loss of a separate decision system.

\paragraph{Selective offloading and routing.}
Structurally, the closest work routes inputs between a weak and a strong detector. Adaptive Feeding classifies each image as easy or hard and sends it to a fast or an accurate detector \citep{zhou2017adaptive}. ORIC defines a per-image offloading reward as the gain in contextual mAP from replacing a weak detector's output with a strong detector's, estimates it from the weak detector's results alone at a cost of about 0.5\,ms, and offloads images under a budget \citep{qiu2024edge}. Budget-adaptive routing predicts from raw pixels whether swapping in the strong detector's output on a frame raises dataset-wide AP, so that the weak pass can be skipped, and charges the estimator to the compute budget \citep{geng2026budget}. Both works find that the strong detector is not better on every input and that reward-guided routing can exceed the strong detector's mAP; within a single network, additional depth can likewise turn a correct prediction into an error \citep{kaya2019overthinking}. \bench{} shares this allocation structure: a signed per-input value, estimated before escalation and selected under a budget. What differs is the value. In these detector-routing methods it is the change in detection quality; in \bench{} it is the change in the loss of a downstream decision system. The two are not interchangeable: on nuScenes, four per-input perception-loss gains disagree in sign with decision value on 25--51\% of the frames where both are non-zero, and the same perception change can have opposite values for two downstream systems. \bench{} also measures whether perception- and decision-level evaluation select different allocators, how much of the oracle each recovers, and whether its advantage survives its own cost. Routing between language models \citep{ong2025routellm} and selective prediction \citep{geifman2017selective,galil2023selective} make per-input choices judged by the quality of a model's own output, and cost-sensitive offloading \citep{moothedath2026offloading} and learning to defer \citep{mozannar2020defer} trade prediction error against offloading or deferral cost; in each, utility is defined on the prediction rather than by the loss of a downstream decision system.

\paragraph{Planning-aware perception evaluation.}
Another line of work evaluates perception by its downstream consequences rather than by task-agnostic accuracy. PKL scores a detector by the divergence between a learned planner's outputs given reference and given detected objects \citep{philion2020pkl}, planning-aware metrics evaluate detection and prediction by their effect on downstream planning \citep{ivanovic2022planningaware}, TIP models the planner as an expected-utility maximizer and analyzes how perception noise affects its objective \citep{li2023tip}, and task-aware risk estimation assesses the risk a detected perception failure poses to the motion plan \citep{antonante2023taskaware}. \bench{} shares this premise but asks a different question. These methods score a perception output or failure once it is available; \bench{} asks for the marginal downstream value of one additional computation on one input, under a budget, and requires the allocator to act before it runs.

\paragraph{Value of information and computation.}
Valuing information or computation by its effect on decisions is classical. Information value theory prices the resolution of uncertainty by the decisions it improves \citep{howard1966information}, rational metareasoning defines the value of a computation as the expected improvement in decision quality it yields, net of its cost \citep{russell1991principles,lieder2017strategy}, active perception chooses sensing actions for the task at hand \citep{bajcsy1988active}, including perceptual attention in robot driving \citep{reece1995control}, and selective perception runs the perceptual analyses whose expected value of information justifies their cost \citep{oliver2005selective}. These ideas motivate our formulation, which operationalizes them for cheap-to-full escalations of learned perception on a fixed input, with the realized change in downstream loss as the value, a budget-constrained oracle as the reference and an allocator that commits before escalation. A negative value does not contradict the non-negative expected value of information: $V_i^q$ is realized on a single input, the full output replaces rather than supplements the cheap one, and it is consumed by a fixed decision rule rather than by a decision maker that optimally combines both.
\section{Decision Value of Perception Compute}
\label{sec:formulation}
Let $\mathbf{x}_i$ be an input with reference scene state $\mathbf{s}_i$. 
The cheap and full perception modes produce
$\mathbf{z}_i^{\mathsf{c}}=f_{\mathsf{c}}(\mathbf{x}_i)$ and
$\mathbf{z}_i^{\mathsf{f}}=f_{\mathsf{f}}(\mathbf{x}_i)$, respectively.
A downstream system $q$ consists of a decision rule $\pi_q$ and a downstream loss $\mathcal{L}_q$.
We define the \emph{decision value} of escalating input $i$ as
\begin{equation}
V_i^q
=
\mathcal{L}_q\!\left(\pi_q(\mathbf{z}_i^{\mathsf{c}}),\mathbf{s}_i\right)
-
\mathcal{L}_q\!\left(\pi_q(\mathbf{z}_i^{\mathsf{f}}),\mathbf{s}_i\right),
\label{eq:intro-value}
\end{equation}
which is positive when the full mode reduces downstream loss, zero when it has no effect, and negative when it increases the loss.
$V_i^q$ is a gross value; the cost of computation enters through the budget below.
A per-input perception loss $\mathcal{E}$ analogously induces the perception gain
$G_i^{\mathcal{E}}
=
\mathcal{E}(\mathbf{z}_i^{\mathsf{c}},\mathbf{s}_i)
-
\mathcal{E}(\mathbf{z}_i^{\mathsf{f}},\mathbf{s}_i)$,
while a planning-aware metric $\mathcal{M}_r$, oriented as a loss, induces
$H_i^r
=
\mathcal{M}_r(\mathbf{z}_i^{\mathsf{c}},\mathbf{s}_i)
-
\mathcal{M}_r(\mathbf{z}_i^{\mathsf{f}},\mathbf{s}_i)$.
Both are positive when the full mode scores better, but the allocation target is $V^q$, not $G^{\mathcal{E}}$ or $H^r$.

\paragraph{The oracle reference.}
With equal escalation costs, a selection budget $k$ permits at most $k$ escalations.
The decision oracle selects
\begin{equation}
\mathcal{A}_q^*(k)
=
\arg\max_{\mathcal{A}:\,|\mathcal{A}|\le k}
\sum_{i\in\mathcal{A}} V_i^q .
\label{eq:oracle}
\end{equation}
The budget is a capacity, not a quota: because $V_i^q$ may be negative, the oracle selects only beneficial escalations and may leave capacity unused.
\bench{} evaluates each input counterfactually against a fixed reference scene state and defines the gain of an allocation as the sum of these per-input values; where a decision rule or loss depends on the previous action, both branches use the action of all-cheap operation (Appendix~\ref{app:hw}).
In sequential control, $V_i^q$ therefore measures the counterfactual value of one escalation for the decision it informs, with the reference scene and history fixed, rather than its closed-loop value.
We define
$\mathcal{P}_q=\{i:V_i^q>0\}$,
and 
$\mathcal{N}_q^-=\{i:V_i^q<0\}$.
Then
\begin{equation}
\underbrace{
\sum_{i\in\mathcal{P}_q} V_i^q
+
\sum_{i\in\mathcal{N}_q^-} V_i^q
}_{\text{all-full}}
<
\underbrace{
\sum_{i\in\mathcal{A}_q^*(k)} V_i^q
}_{\text{oracle at budget }k}
\quad
\text{whenever }
|\mathcal{P}_q|\le k
\text{ and }
\mathcal{N}_q^-\neq\varnothing .
\label{eq:ceiling}
\end{equation}
Thus, whenever all beneficial escalations fit within the budget and at least one escalation is harmful, the oracle's decision gain strictly exceeds that of all-full inference.
The oracle thus measures the headroom of allocation, while all-full inference remains a practical baseline when it fits the budget.
\paragraph{What a single score must encode.}
Eq.~(\ref{eq:oracle}) also determines what an allocator must predict.
Let $S_i$ denote its score for input $i$, and let $\mathcal{A}_S(k)$ contain the $k$ highest-scoring inputs.
An optimal score must do more than identify promising inputs: it must order beneficial escalations by their decision value and separate them from non-beneficial ones. We obtain the following characterization.

\begin{proposition}[Optimal ranking under equal costs]
Assume equal escalation costs and let \(K_q = |\{i:V_i^q\ge 0\}|\).
A score attains the optimum of Eq.~(\ref{eq:oracle}) for every budget $k\le K_q$ if and only if:
(i) all inputs with $V_i^q>0$ are ranked ahead of those with $V_i^q\le 0$, with positive values ordered non-increasingly; and
(ii) all inputs with $V_i^q=0$ are ranked ahead of those with $V_i^q<0$.
\label{prop:rank}
\end{proposition}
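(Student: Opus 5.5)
The plan is to reduce both directions to a statement about prefix sums of the ranked values. Fix a strict ranking induced by $S$ (ties broken by a fixed rule, so that $\mathcal{A}_S(k)$ is the set of the first $k$ ranked inputs). Let $v_{(1)}\ge v_{(2)}\ge\cdots$ denote the values $V_i^q$ sorted non-increasingly. By definition of $K_q$, the values $v_{(1)},\dots,v_{(K_q)}$ are exactly the non-negative ones.

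The first step is to compute the optimum of Eq.~(\ref{eq:oracle}) for $k\le K_q$. Any feasible $\mathcal{A}$ with $|\mathcal{A}|\le k$ has $\sum_{i\in\mathcal{A}}V_i^q\le\sum_{j\le k}\max(v_{(j)},0)=\sum_{j\le k}v_{(j)}$. The last equality holds because $k\le K_q$ makes the top $k$ values non-negative. This bound is attained by a set of size exactly $k$. So even though the budget is a capacity rather than a quota, the optimum value is $\mathrm{OPT}(k)=\sum_{j\le k}v_{(j)}$. Moreover, a size-$k$ set $\mathcal{A}_S(k)$ attains it if and only if its value sum equals this prefix sum of sorted values. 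This is the one point needing care: $\mathcal{A}_S(k)$ always has exactly $k$ elements, so I must check that the restriction $k\le K_q$ is what prevents the oracle from strictly preferring a smaller set.

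For sufficiency, note that (i) and (ii) together say the ranking lists the positive inputs first in non-increasing order, then the zero-valued inputs, then the negative ones. Hence the value at rank $j$ equals $v_{(j)}$ for every $j\le K_q$. Each prefix $\mathcal{A}_S(k)$ with $k\le K_q$ therefore sums to $\mathrm{OPT}(k)$.

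For necessity, write $w_j$ for the value of the input at rank $j$. Optimality for every $k\le K_q$ gives $\sum_{j\le k}w_j=\sum_{j\le k}v_{(j)}$ for all $k\le K_q$. Differencing consecutive $k$ gives $w_j=v_{(j)}$ for $j\le K_q$. Thus the first $K_q$ ranked inputs carry the multiset of all non-negative values, arranged non-increasingly. Since there are exactly $K_q$ non-negative inputs, every non-negative input lies in the first $K_q$ positions, and every negative input lies after them; this yields (ii) and the "ahead of $V\le 0$" part of (i) for negatives. Within the first $K_q$ positions the values are non-increasing, so all positive values precede all zeros and the positives appear in non-increasing order. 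This completes (i).

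I expect no deep obstacle. The main things to state carefully are the tie convention for scores, and the remark that ties among equal positive values are harmless because "non-increasing" permits any order among equals.
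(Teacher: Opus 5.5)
Your proof is correct. The computation of $\mathrm{OPT}(k)$ and the sufficiency direction match the paper's; the necessity direction takes a different route.

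\textbf{How the paper argues necessity.} It uses contraposition, one condition at a time.
\begin{itemize}
\item If a negative input is ranked above a zero-valued one, the top $K_q$ inputs must include a negative input. The selection then falls short of the optimum at budget $K_q$.
\item If a positive input $i$ is ranked below some $j$ with $V_j^q<V_i^q$, there are two cases. If $j$'s position is at most $K_q$, then at that budget swapping $j$ for $i$ strictly improves the selection. Otherwise $i$ is missing from the top $K_q$.
\end{itemize}

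\textbf{How you argue it.} You use all budgets at once. Equality of the ranked prefix sums with the sorted prefix sums for every $k\le K_q$ telescopes to $w_j=v_{(j)}$ rank by rank. Conditions (i) and (ii) then follow together, with no case split.

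\textbf{What each route gives.} Your version yields a cleaner intermediate statement: optimality at every $k\le K_q$ is equivalent to the first $K_q$ ranks carrying exactly the sorted non-negative values. The first rank where $w_j\neq v_{(j)}$ is then automatically a budget at which the score is suboptimal. The paper's exchange argument is more local. It attaches each violating pair to an explicit budget and a quantified loss, either $V_i^q-V_j^q$ or at least $V_i^q$. That is the pairwise swap accounting the paper reuses in its top-$k$ regret bound.
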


The proof is in Appendix~\ref{app:proofs}, which also shows that no single strict ranking is optimal for two downstream systems with a ranking reversal among beneficial escalations. Allocation quality therefore depends on the ordering of beneficial inputs and on the boundary between useful and harmful escalations, not on correlation with $V^q$ alone.
\paragraph{Multiple fidelities and measured cost.}
In deployment, perception modes differ in cost and the allocator itself is not free.
Let $\mathcal{F}$ denote the available fidelity modes, with cheap mode $\mathsf{c}\in\mathcal{F}$.
For each mode $m\in\mathcal{F}$, define
\begin{equation}
V_i^{q,m}
=
\mathcal{L}_q\!\left(\pi_q(\mathbf{z}_i^{\mathsf{c}}),\mathbf{s}_i\right)
-
\mathcal{L}_q\!\left(\pi_q(\mathbf{z}_i^{m}),\mathbf{s}_i\right),
\label{eq:multifid-value}
\end{equation}
and let $\Delta C_m$ be the additional compute cost of selecting mode $m$, with $\Delta C_{\mathsf{c}}=0$.
In our cascade, higher-fidelity modes are run from scratch, so $\Delta C_m=C_m$.
If the cheap mode and the allocator run on every input at cost $C_{\mathsf{c}}+C_S$, a per-input budget
$b\ge C_{\mathsf{c}}+C_S$ leaves
$B=N\!\left(b-C_{\mathsf{c}}-C_S\right)$
for escalation.
Allocation then becomes
\begin{equation}
\max_{\{m_i\in\mathcal{F}\}}
\sum_i V_i^{q,m_i}
\qquad
\text{s.t.}
\qquad
\sum_i \Delta C_{m_i}\le B ,
\label{eq:multifid}
\end{equation}
a multiple-choice knapsack that reduces to Eq.~(\ref{eq:oracle}) for two modes with equal escalation cost.
The budget is an average compute budget over inputs, not a per-frame deadline.

\section{\bench{}: Decision Evaluation for Escalated Perception}
\label{sec:benchmark}
\bench{} evaluates allocators, not perception models: a track fixes the perception modes, the downstream system, the data split and the cost profile, and a submission scores each input before escalation. Its design follows four principles (Figure~\ref{fig:task}). \emph{Decision alignment}: additional computation is valued by its marginal effect on a fixed downstream loss, $V_i^q$ in Eq.~(\ref{eq:intro-value}), rather than by perception quality. \emph{Pre-escalation validity}: an allocator commits before the expensive mode runs and may use only the input, the cheap output, previous frames, calibration and ego proprioception; such signals are \emph{deployable}. Signals derived from the full output or the reference state are \emph{diagnostic}: they never enter an allocator's input at test time, and when used directly as scores they only quantify what unavailable information would contribute. On the non-test units, a supervised allocator may use decision values, full outputs and reference states as supervision. \emph{Budget-matched reference}: allocators are compared with the decision oracle at the same budget, and uniform full-fidelity inference is reported as a baseline, not as an upper bound. \emph{Resource accounting}: under latency and energy budgets, the allocator's own computation is charged on every input.

\begin{figure}[t]
\centering
\includegraphics[width=0.86\linewidth]{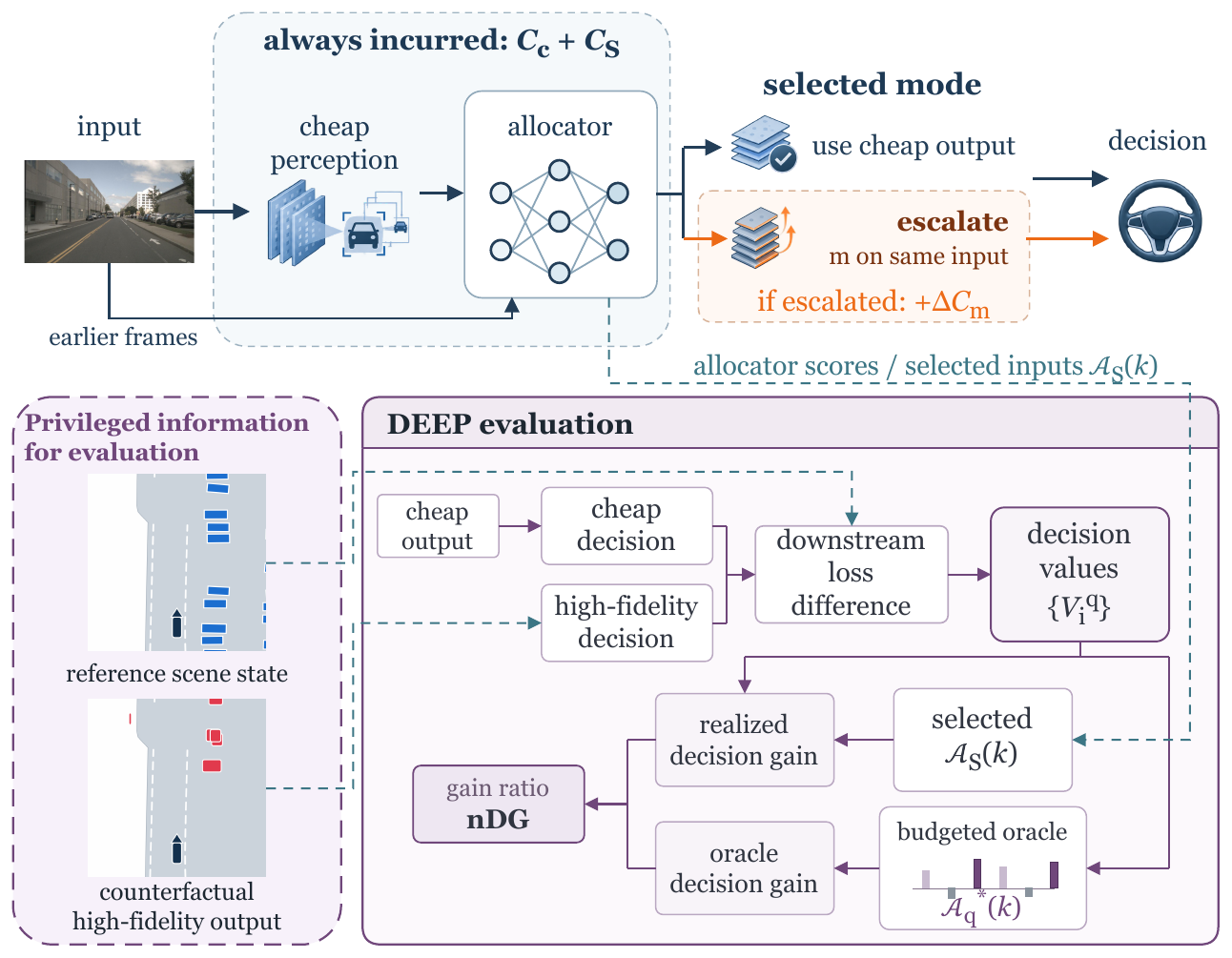}
\caption{\textbf{The \bench{} task.}
Top, deployment: every input pays for the cheap mode and the allocator, which scores it from pre-escalation information only; an escalated input also pays for its higher-fidelity mode.
Bottom, offline evaluation, unavailable to the allocator: the decisions obtained from the cached cheap and high-fidelity outputs are scored on the same reference scene state to give $V_i^q$, inputs are selected from the scores under the budget, and $\ndg=$ realized gain / oracle gain.}
\label{fig:task}
\end{figure}

\subsection{Tracks and downstream systems}
\label{sec:systems}

In every escalation pair the full mode is a detector at $640$ and the cheap mode is the same detector at lower resolution: YOLOv8s \citep{ultralytics2023yolov8} from $320$ on nuScenes and nuPlan, and on KITTI additionally YOLOv8s from $384$ and $512$ and RT-DETR-l \citep{zhao2024rtdetr} from $320$ and $480$.
The cached outputs of both modes enter Eq.~(\ref{eq:intro-value}) counterfactually, on the same reference scene state.
A cell is one combination of dataset, detector pair, geometry and downstream system. Each core cell uses either deployed monocular geometry or an oracle-geometry control in which matched detections inherit reference 3D geometry and unmatched ones keep monocular geometry; the control changes only the geometry passed to the downstream system, and allocators receive identical inputs in both.

The core track contains three downstream systems.
The braking controller $q_{\mathrm{brake}}$, on nuScenes and KITTI, is evaluated with a braking loss over safety shortfall and unnecessary braking.
The receding-horizon controller $q_{\mathrm{traj}}$, used on KITTI, selects longitudinal and lateral actions from perceived geometry and evaluates them against reference obstacles.
The published learned planner $q_{\mathrm{plan}}$ on nuScenes is evaluated by the displacement of its argmax path from the logged future ego trajectory on the 2{,}655 frames that have one (Appendix~\ref{app:hw}); it is included because PKL is built on it, and its loss measures deviation from the logged trajectory rather than safety.
The external track runs PDM-Closed and IDM \citep{dauner2023pdm} at their released configurations on nuPlan \citep{caesar2021nuplan}, with trajectories scored against logged tracks by geometric losses.
Each mode's observation is built from real detections on the nuPlan front camera: logged tracks inside the image are retained only when detected by that mode, while unmatched detections enter as false positives lifted onto the road plane.
Each state is evaluated in open loop with a fresh planner instance per branch (Appendix~\ref{app:external}).

\subsection{Splits, statistics and metrics}
\label{sec:splits}

Each track uses one deterministic split, fixed before any baseline was evaluated: by scene on nuScenes, by sequence on KITTI, and by log on nuPlan, since same-log scenarios overlap in time.
The test splits contain 24 scenes, 6 sequences, and 9 logs, respectively.
Allocators are reported on these held-out units; learned allocators are fit on the remaining units and evaluated once on the test split.

Setting-level quantities, namely the harm rate, the share of affected inputs ($V_i^q\neq 0$) with $V_i^q<0$, and the harm ratio, are reported over all units because they characterize the perception--decision pair rather than a learned allocator.
Confidence intervals use 1{,}000 bootstrap resamples of scenes, sequences, or logs, never frames, and comparisons with random are paired within each draw.
Selections are evaluated in exact expectation over tie-breaks, which matters for discrete signals (Appendix~\ref{app:ties}).

\textbf{Normalized decision gain} (nDG) is the fraction of oracle decision value realized by an allocator at the same budget, and \textbf{harm ratio} $\rho_q$ characterizes the perception--decision setting:
\begin{equation}
\ndg_q(S,k)
=
\frac{
\sum_{i\in\mathcal{A}_S(k)} V_i^q
}{
\sum_{i\in\mathcal{A}_q^*(k)} V_i^q
}, \qquad  \rho_q
=
\frac{
\sum_{i:V_i^q<0}|V_i^q|
}{
\sum_{i:V_i^q>0}V_i^q
}.
\label{eq:ndg}
\end{equation}
Since the oracle maximizes the numerator, $\ndg\in(-\infty,1]$:
$\ndg=1$ matches the oracle,
$\ndg=0$ realizes no net decision gain,
and $\ndg<0$ indicates that the selected escalations cause net harm.
When all beneficial inputs fit within the budget, uniform full-fidelity inference retains a fraction $1-\rho_q$ of the oracle value; thus $\rho_q=0$ is the monotone-benefit case in which full-fidelity inference absorbs no harmful value.
Both metrics are reported only when their denominators are positive. The oracle may leave capacity unused, whereas a ranking escalates exactly its $k$ highest-scoring inputs; under measured budgets, nDG divides by the zero-overhead oracle at the same budget, so an allocator's own cost reduces the capacity it can use.
Because nDG normalizes by oracle value, it is unstable when that value sits in few inputs; Appendix~\ref{app:bench} therefore also reports realized decision gain.

\subsection{Budgets and baselines}
\label{sec:baselines}

The measured-budget track implements Eq.~(\ref{eq:multifid}) as a cascade: the cheap pipeline and allocator run on every input, and an escalated input additionally runs its selected higher-fidelity mode.
Per-input budgets
$b = C_{\mathsf{c}}+\alpha C_{\mathsf{f}}$
are expressed in milliseconds or millijoules at levels where a zero-cost allocator could escalate a fraction
$\alpha\in\{0.1,0.2,0.3,0.5\}$
of inputs.
Detectors run in FP16 TensorRT on the target device; measured budgets charge each perception pipeline and the allocator's preprocessing and inference, with the timing boundary of each track in Appendix~\ref{app:hw}. Selection budgets measure ranking quality by ranking each test split; causal allocation at a fixed rate is reported in Appendix~\ref{app:bench}.

We include random routing and ego speed as simple controls;
cheap-detection uncertainty and cheap-side criticality as hand-designed signals;
ridge and gradient-boosted gates over cheap-side features;
a detection-list router in the style of ORIC over the 25 most confident cheap detections;
and a 0.15\,GFLOPs pixel router on the downscaled camera frame.
The diagnostic signals comprise criticality from reference geometry, eight fixed perception-loss gains, including the false-negative gain (the reduction in missed objects) and the risk-weighted $\mathcal{E}_{\mathrm{risk}}$, and the PKL and TIP gains.

\section{Empirical Findings}
\label{sec:results}

The experiments cover 3{,}376 nuScenes and 8{,}008 KITTI frames with three downstream systems on the core track and 1{,}440 nuPlan states with two published planners on the external track. Allocators are reported on the frozen test splits at a 20\% budget unless stated otherwise. We ask, in turn, whether escalation has signed decision value (Section~\ref{sec:sign}), whether this holds for published planners under real detector outcomes (Section~\ref{sec:external}), whether decision value can be predicted before escalation (Section~\ref{sec:signals-results}), whether judging allocators by perception gain selects the right ones (Section~\ref{sec:planning-metric}), and whether an allocator's advantage survives its measured cost (Section~\ref{sec:cost}).

\subsection{Expensive perception has sign-varying downstream value}
\label{sec:sign}

\begin{table}[t]
\caption{Sign-varying decision value across datasets, detector families, fidelity gaps and downstream systems, over all units. Affected: inputs with non-zero decision value, of 3{,}376 nuScenes frames (2{,}655 for $q_{\mathrm{plan}}$), 8{,}008 KITTI frames and 1{,}440 nuPlan states. Harmed: share of affected inputs made worse by the full mode. All-full and Oracle@20: loss reduction relative to all-cheap inference, for escalating every input (all-full, a compute-agnostic baseline) and for the oracle escalating up to 20\% of inputs; $^\dagger$ marks budget-binding cells. Rows show both geometries on nuScenes and deployed geometry on KITTI; the KITTI oracle-geometry controls are in Appendix~\ref{app:sign}. nuPlan rows use real detector outcomes on nuPlan images. See also Appendices~\ref{app:sign} and~\ref{app:external}.}
\label{tab:ceiling}
\centering
\footnotesize
\setlength{\tabcolsep}{4.5pt}
\begin{tabular}{llrrrrr}
\toprule
Dataset, system, geometry & Transition / loss & Affected & Harmed & $\rho_q$ & All-full & Oracle@20 \\
\midrule
nuScenes, $q_{\mathrm{brake}}$, oracle & YOLOv8s 320$\to$640 & 141 & 35.5\% & 0.42 & 14.01\% & 24.07\% \\
nuScenes, $q_{\mathrm{brake}}$, mono & YOLOv8s 320$\to$640 & 273 & 39.6\% & 0.41 & 16.47\% & 27.77\% \\
nuScenes, $q_{\mathrm{plan}}$, oracle$^\dagger$ & YOLOv8s 320$\to$640 & 1{,}179 & 50.9\% & 0.70 & \phantom{0}1.31\% & \phantom{0}4.41\% \\
nuScenes, $q_{\mathrm{plan}}$, mono$^\dagger$ & YOLOv8s 320$\to$640 & 1{,}612 & 49.3\% & 0.71 & \phantom{0}1.59\% & \phantom{0}5.31\% \\
\midrule
KITTI, $q_{\mathrm{traj}}$, mono & YOLOv8s 320$\to$640 & 678 & 48.4\% & 0.28 & 12.78\% & 17.81\% \\
KITTI, $q_{\mathrm{traj}}$, mono & YOLOv8s 384$\to$640 & 458 & 48.7\% & 0.46 & \phantom{0}5.97\% & 11.03\% \\
KITTI, $q_{\mathrm{traj}}$, mono & YOLOv8s 512$\to$640 & 266 & 53.8\% & 1.11 & $-0.59$\% & \phantom{0}5.16\% \\
KITTI, $q_{\mathrm{traj}}$, mono & RT-DETR-l 480$\to$640 & 295 & 43.4\% & 0.58 & \phantom{0}2.94\% & \phantom{0}6.98\% \\
\midrule
nuPlan, PDM-Closed & safety loss & 45 & 44.4\% & 0.33 & \phantom{0}9.81\% & 14.74\% \\
nuPlan, IDM & safety loss & 5 & 40.0\% & 0.06 & \phantom{0}1.42\% & \phantom{0}1.50\% \\
\bottomrule
\end{tabular}
\end{table}

Table~\ref{tab:ceiling} reports decision value across our settings. With monocular geometry, escalation harms 34--54\% of affected inputs in every setting (for KITTI braking, Appendix~\ref{app:sign}).
On nuScenes under $q_{\mathrm{brake}}$ with oracle geometry, helpful escalations reduce the all-cheap loss by 24.1\% and harmful ones increase it by 10.1\%, so $\rho_q=0.42$ and uniform escalation reduces the loss by a net 14.0\%. Under monocular geometry, uniform escalation reduces the trajectory controller's loss by at most 12.8\%, and at 512$\to$640 it raises the loss by 0.59\% ($\rho_q=1.11$, interval $[0.45,3.82]$).

For the braking controller on nuScenes under oracle geometry, the mechanism is an exchange between recovered misses and added false positives among the objects the controller considers: on harmed frames the full mode adds a false positive that the controller reacts to in 76\% of cases and recovers a miss it needs in 6\%, against 29\% and 62\% on helped frames. Across braking settings, 73--96\% of the harm is unnecessary braking, not a safety shortfall. Higher resolution raises recall and, at a fixed threshold, surfaces more low-quality detections, so perception improvement does not fix the sign of decision value: in this setting, across four perception-level gains, the signs disagree on 25--33\% of braking and 50--51\% of planner frames where both are non-zero. The downstream system also matters: of the 79 frames to which both systems respond, 40 are helped under one and harmed under the other (Appendix~\ref{app:mech}).

Sign variation also persists when each fidelity gets its own operating point, chosen on the training units from precision and recall only. With equal box counts on nuScenes braking under monocular geometry, the full mode is more precise than the cheap mode (0.76 against 0.65), yet 36.2\% of affected frames are harmed; under each of three such schemes, every nuScenes and moderate-gap KITTI cell keeps harm of at least 34\% and $\rho_q\ge 0.33$ (Appendix~\ref{app:calib}).

\subsection{The pattern holds for a published planner under real detector outcomes}
\label{sec:external}

PDM-Closed corroborates the core finding under real detector outcomes on nuPlan's front camera, with both published planners at their released configurations (Table~\ref{tab:ceiling}): escalation harms 44\% of the states whose safety loss changes and 38\% under the scalar loss, with $\rho_q$ 0.33 and 0.35, and an oracle at 20\% capacity reduces the safety loss by 14.7\% against 9.8\% for uniform escalation. IDM changes its safety loss on only 5 of 1{,}440 states, consistent with equal recall of the two modes within 10\,m, so the same detections matter to one published planner and barely to another. Decision-changing events are sparse, so this track corroborates the core track rather than replacing it (Appendix~\ref{app:external}).

\paragraph{Construct validity.} The controls of Sections~\ref{sec:sign} and~\ref{sec:external} test whether signed decision value is an artifact of the benchmark's construction. Range error of matched objects does not produce it: the braking controller still harms 25--48\% of affected inputs when matched detections inherit reference geometry, although the trajectory controller's harm ratio then falls below 0.06 on four of five fidelity pairs. It persists under per-mode operating points. Transient detections do not produce it, since requiring detections to persist for three frames moves harmed shares by at most 6.8 points (Appendix~\ref{app:sign}). Nor do loss weights, corridor width, one detector or one decision rule: varying the weights and width keeps the harm rate at 33--55\% in every monocular setting (Appendix~\ref{app:hw}), and it appears for both detector families, every fidelity gap we test and a published planner. These controls do not establish that the magnitudes transfer to other detectors, planners or closed-loop operation.

\subsection{Learned allocators find pre-escalation signal}
\label{sec:signals-results}

\begin{table}[t]
\caption{Official held-out results on the core track (YOLOv8s 320$\to$640): nDG of the deployable baselines at a 20\% selection budget on the frozen test splits. Bold (significance, not rank): the paired 95\% interval of realized decision gain against random, over every bootstrap draw, lies above zero; $^{*}$: below zero. Planner rows use $q_{\mathrm{plan}}$ with average displacement; the final-displacement variants, which with these rows form the ten core cells, are in Appendix~\ref{app:bench}. The detection-list router appears in its four pre-specified variants, fitted to $V$ or $\mathbf{1}[V>0]$.}
\label{tab:heldout-main}
\centering
\scriptsize
\setlength{\tabcolsep}{2.1pt}
\begin{tabular}{lrrrrrrrrrr}
\toprule
& \multicolumn{2}{c}{Controls} & \multicolumn{2}{c}{Heuristic} & \multicolumn{2}{c}{Gate} & \multicolumn{4}{c}{Detection-list router} \\
\cmidrule(lr){2-3}\cmidrule(lr){4-5}\cmidrule(lr){6-7}\cmidrule(lr){8-11}
Cell & Rand. & Ego sp. & Unc. & Crit. & ridge & GBM & MLP, $V$ & MLP, $V{>}0$ & GBM, $V$ & GBM, $V{>}0$ \\
\midrule
nuScenes oracle, $q_{\mathrm{brake}}$ & 0.148 & 0.208 & 0.015 & 0.470 & 0.254 & 0.155 & 0.354 & 0.354 & 0.196 & 0.421 \\
nuScenes mono, $q_{\mathrm{brake}}$ & 0.132 & 0.148 & 0.088 & 0.407 & 0.293 & \textbf{0.482} & \textbf{0.366} & 0.255 & \textbf{0.383} & 0.369 \\
nuScenes oracle, $q_{\mathrm{plan}}$ & $-0.006$ & $-0.021$ & $-0.117$ & $-0.272^{*}$ & $-0.150$ & $-0.141$ & $-0.079$ & $-0.074$ & $-0.083$ & 0.031 \\
nuScenes mono, $q_{\mathrm{plan}}$ & 0.092 & $-0.015$ & 0.013 & $-0.060$ & 0.063 & 0.042 & 0.250 & 0.102 & $-0.022^{*}$ & 0.090 \\
\midrule
KITTI oracle, $q_{\mathrm{brake}}$ & 0.139 & \textbf{0.584} & $-0.010^{*}$ & $-0.013^{*}$ & 0.238 & 0.226 & 0.255 & \textbf{0.313} & 0.293 & \textbf{0.360} \\
KITTI mono, $q_{\mathrm{brake}}$ & 0.079 & \textbf{0.415} & 0.035 & 0.074 & 0.156 & 0.158 & \textbf{0.178} & \textbf{0.172} & \textbf{0.201} & \textbf{0.215} \\
KITTI oracle, $q_{\mathrm{traj}}$ & 0.199 & 0.795 & $0.031^{*}$ & $0.015^{*}$ & 0.455 & \textbf{0.538} & 0.356 & \textbf{0.422} & \textbf{0.516} & \textbf{0.484} \\
KITTI mono, $q_{\mathrm{traj}}$ & $-0.091$ & $-0.691$ & 0.148 & 0.178 & 0.298 & 0.184 & 0.107 & 0.127 & 0.125 & 0.059 \\
\bottomrule
\end{tabular}
\end{table}

Table~\ref{tab:heldout-main} reports the official held-out results for pre-escalation scores. Neither heuristic significantly outperforms random in any cell at the 20\% selection budget; cheap-side criticality is significantly worse on three cells and uncertainty on two. Gains from learned allocators appear under oracle geometry on both KITTI controllers, where the gradient-boosted gate reaches $0.538$ on the trajectory controller. Under monocular geometry, the gradient-boosted gate and both regression routers beat random on nuScenes braking, the gate at $0.482$ against $0.132$ for random (paired gain $0.087$ $[0.014,0.154]$ of the all-cheap loss), and all four router variants do so on KITTI braking.

The nuScenes wins also hold under the filtered nDG interval, and each keeps an nDG of at least $0.33$ when any single test scene is removed; the gradient-boosted fits do not depend on the training seed, whereas the MLP router's win holds for two of six seeds. On nuPlan the wins concentrate in one test log, so the core track carries our conclusions (Appendix~\ref{app:bench}).

Ego speed is a strong trivial baseline on KITTI braking, where it has the best point estimate of any deployable signal, but not on nuScenes at 20\%. Trained instead on the published routing objectives of ORIC and budget-adaptive routing, the same five router architectures beat random in 12 and 15 of 200 architecture, cell and budget combinations, against 40 for decision value, although the direction depends on the architecture and partly on the target transform (Appendix~\ref{app:bench}).

\begin{figure}[t]
\centering
\includegraphics[width=\linewidth]{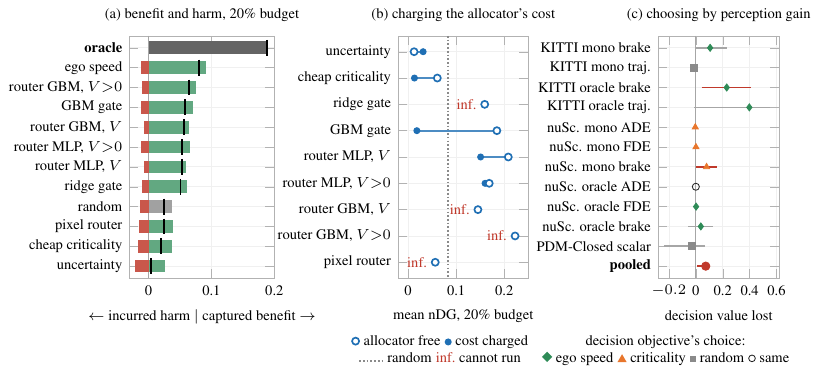}
\caption{\textbf{Where decision value is lost}, in shares of the all-cheap loss unless stated. \textbf{(a)} Benefit captured (green), harm selected (red) and net gain (black) of each allocator at a 20\% selection budget, mean over the ten core cells. \textbf{(b)} nDG with the allocator free and with its measured latency charged, 20\% budget, mean over the ten core cells; inf.: the allocator's cost exceeds the budget. \textbf{(c)} Test gain of the signal chosen on validation by decision value minus that of the signal chosen by false-negative gain, among the target-free signals of Section~\ref{sec:planning-metric}; markers: the decision objective's choice (open: same choice); paired 95\% intervals, red if excluding zero; pooled: their equal-weight mean.}
\label{fig:gap}
\end{figure}

\subsection{Ranking allocators by perception gain can select worse ones}
\label{sec:planning-metric}

Consider the target-free deployable signals, namely random, uncertainty, cheap-side criticality and ego speed, none fitted to either objective. Scored directly on the test splits, the perception and decision objectives share no optimum in 43 of 52 cell--budget pairs. When each objective instead selects its winner on the validation units, choosing by the false-negative gain lowers realized test decision gain by $0.074$ $[0.010,0.108]$ of the all-cheap loss at a 20\% budget, a mean over eleven cells, most of it on KITTI under oracle geometry (Figure~\ref{fig:gap}c); under the risk-weighted gain the reduction is not resolved. The perception-evaluation signals themselves carry information but are diagnostic rather than deployable: the risk-weighted and false-negative gains each separate from random on two of the ten core cells, and the per-input PKL and TIP gains on none of the six nuScenes cells (Appendix~\ref{app:bench}). All of them require the full output or the reference state, and even computed exactly they recover only part of the oracle's value, with the best of them changing with the downstream system (Appendix~\ref{app:planning}).

\subsection{Measured cost reorders the allocators}
\label{sec:cost}

A selection budget treats the allocator as free; charging its measured cost reorders the allocators (Figure~\ref{fig:gap}b). The feature-based gates spend 18--19\% of a full pass on feature extraction, so at a 20\% latency budget the ridge gate cannot run and the batched gradient-boosted gate escalates at most 1.7\% of inputs; neither beats random. The MLP detection-list routers cost 0.65\,ms, about 3.5\% of a full pass. They still escalate 16.5\% of inputs at the 20\% budget, and are the only learned allocators that beat random there. A useful summary is an allocator's \emph{break-even overhead}, the largest per-input cost at which its ranking still realizes random's gain: at this budget its median over the core cells is 1.6--3.0\,ms for the learned allocators, which the MLP routers meet in seven or eight of ten cells and the gradient-boosted routers and single-call gate exceed 6- to 14-fold. A 0.15\,GFLOPs pixel router costs 26--50\% of a full pass and beats random nowhere. Uniform full fidelity fits only once $\alpha$ reaches 29--34\% on the core track and 39\% on nuPlan; above that, with overhead charged and causal enforcement, it leads the best feasible allocator in eight of ten core cells at a 50\% latency budget. Choosing among fidelities also matters: on KITTI at a 20\% budget, the best allocation restricted to the 640\,px mode reaches 0.73 of the multi-fidelity optimum on braking and 0.57 on the trajectory controller, and latency and energy rank the intermediate modes differently, with 384\,px costing 0.78 of the 640\,px mode in milliseconds but 0.57 in millijoules.

\paragraph{Allocators fall short of the oracle by missing benefit.}
\label{sec:gap}
Learned allocators beat random by capturing more benefit but select nearly as much harm (Figure~\ref{fig:gap}a); ego speed has the largest net gain on average, carried by the KITTI cells. For learned allocators that fit a measured budget, ranking, not overhead or causal enforcement, accounts for most of the gap (Appendix~\ref{app:bench}).

\section{Conclusion}
\label{sec:discussion}

Additional perception compute does not have monotone downstream value: in every monocular-geometry setting, 34--54\% of the escalations that change the downstream loss make it worse, and on nuScenes perception-level gain frequently disagrees in sign with decision value, so ranking allocators by perception gain can select worse ones. Measured against the decision oracle, learned allocators show that pre-escalation signal exists: at a 20\% selection budget they capture on average about a third of the oracle's beneficial value and select nearly as much harm as random, and under measured budgets only low-overhead ones keep an advantage over random. A new allocator should improve realized decision gain from pre-escalation information only, also once its own cost is charged, and be reported per downstream system, with nDG beside realized gain and the failure modes of Figure~\ref{fig:gap} separated: missed benefit, selected harm, and computation spent on the ranking. \bench{} measures single-step counterfactual value; closed-loop evaluation from camera input needs sensor simulation.

\bibliography{references}
\bibliographystyle{iclr2027_conference}

\newpage
\appendix
\renewcommand{\topfraction}{0.9}\renewcommand{\bottomfraction}{0.8}\renewcommand{\textfraction}{0.07}\renewcommand{\floatpagefraction}{0.8}

\begin{center}
    \Large{\textbf{APPENDIX}}
\end{center}

\section{Proofs}
\label{app:proofs}

\subsection{Proof of Proposition~\ref{prop:rank}}
Write $\mathcal{P}_q=\{i:V_i^q>0\}$, $\mathcal{Z}_q=\{i:V_i^q=0\}$ and $\mathcal{N}_q^-=\{i:V_i^q<0\}$, so that $K_q=|\mathcal{P}_q|+|\mathcal{Z}_q|$. At budget $k$ a score selects its $k$ top-ranked inputs. With equal escalation costs every $\mathcal{A}$ with $|\mathcal{A}|\le k$ is feasible, and removing a non-positive element never decreases a sum, so the optimum of Eq.~(\ref{eq:oracle}) is
\begin{equation}
\mathrm{OPT}(k)=\sum_{r=1}^{\min(k,|\mathcal{P}_q|)}V_{(r)}^q ,
\end{equation}
where $V_{(r)}^q$ is the $r$-th largest positive value; for $k\ge|\mathcal{P}_q|$ it equals $\sum_{i\in\mathcal{P}_q}V_i^q$.

\paragraph{Sufficiency.} Under (i) and (ii), for $k\le|\mathcal{P}_q|$ the top $k$ inputs are positive inputs carrying the $k$ largest values, with ties in any order, and for $|\mathcal{P}_q|<k\le K_q$ they are $\mathcal{P}_q$ together with $k-|\mathcal{P}_q|$ zero-valued inputs. Both attain $\mathrm{OPT}(k)$ whatever the order of the zero-valued inputs.

\paragraph{Necessity of (ii).} If some input in $\mathcal{N}_q^-$ is ranked above some input in $\mathcal{Z}_q$, the top $K_q$ inputs cannot all be non-negative, so at budget $K_q$ the selection contains some $j\in\mathcal{N}_q^-$. Its value is then at most $\sum_{i\in\mathcal{P}_q}V_i^q+V_j^q<\mathrm{OPT}(K_q)$.

\paragraph{Necessity of (i).} Suppose some $i\in\mathcal{P}_q$ is ranked below an input $j$ with $V_j^q<V_i^q$, and let $k$ be the position of $j$. If $k\le K_q$, the selection at budget $k$ contains $j$ but not $i$, and exchanging $j$ for $i$ gives a feasible set with strictly larger value, so the score falls below $\mathrm{OPT}(k)$. If $k>K_q$, then $i$ is not among the top $K_q$ inputs, and the value at budget $K_q$ is at most $\sum_{i'\in\mathcal{P}_q\setminus\{i\}}V_{i'}^q<\mathrm{OPT}(K_q)$. \qed

\paragraph{Abstention.} If an allocator may also leave capacity unused, through an abstention set whose members are never selected, the same argument with (ii) replaced by abstention on every input in $\mathcal{N}_q^-$, and every input in $\mathcal{P}_q$ left non-abstained, gives optimality at every budget, including $k>K_q$.

\paragraph{Remark.} Both conditions use equal escalation costs. With per-input costs $\Delta C_{m_i}$ and a budget $B$, Eq.~(\ref{eq:multifid}) is a knapsack whose optimum depends on the pairs $(V_i^{q,m_i},\Delta C_{m_i})$ jointly; no ordering of $V^q$ alone is sufficient, and the abstention rule generalizes to a threshold on $V_i^{q,m_i}/\Delta C_{m_i}$ only in the fractional relaxation.

\subsection{No shared optimum across downstream systems}
\begin{corollary}[No shared optimum under a ranking reversal]
If two inputs $i,j$ have positive value under downstream systems $q_1$ and $q_2$, but
$V_i^{q_1}>V_j^{q_1}$ while
$V_i^{q_2}<V_j^{q_2}$,
then no single strict ranking is optimal at every budget for both systems.
\label{cor:reversal}
\end{corollary}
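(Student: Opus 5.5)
The plan is to derive the corollary from the necessity of condition~(i) in Proposition~\ref{prop:rank}. We argue by contradiction. Suppose a single strict ranking $\sigma$ over the inputs is optimal at every budget for both $q_1$ and $q_2$. Being optimal at every budget includes every $k\le K_{q_1}$ and every $k\le K_{q_2}$, so the hypothesis of Proposition~\ref{prop:rank} holds for each system separately. Its necessity direction then forces $\sigma$ to satisfy condition~(i) for $q_1$ and for $q_2$.

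Next we apply condition~(i) to the two inputs $i,j$. Both lie in $\mathcal{P}_{q_1}\cap\mathcal{P}_{q_2}$ by assumption. Under $q_1$, condition~(i) says positive values are ordered non-increasingly. Because $V_i^{q_1}>V_j^{q_1}$ is strict and $\sigma$ is strict (no ties in position), $i$ must be ranked strictly ahead of $j$. Otherwise $i\in\mathcal{P}_{q_1}$ would sit below an input $j$ with smaller value, which is exactly the configuration the necessity argument rules out. The same reasoning under $q_2$, using $V_i^{q_2}<V_j^{q_2}$, forces $j$ strictly ahead of $i$. A strict ranking cannot place each of $i$ and $j$ ahead of the other, so we have a contradiction.

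For a self-contained version, we can instead exhibit a failing budget directly. Let $k$ be the position of whichever of $i,j$ is ranked higher in $\sigma$. If $j$ is above $i$, the necessity argument for $q_1$ gives a budget, either the position of $j$ or $K_{q_1}$, at which swapping $j$ for $i$ strictly improves the selection. If $i$ is above $j$, the symmetric argument works for $q_2$.

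The main care point is the phrase ``optimal at every budget''. Proposition~\ref{prop:rank} characterizes optimality only for budgets $k\le K_q$, so we must check that the failing budget produced by the necessity argument lies in that range. The proof of necessity of~(i) already guarantees this: it uses either the position of $j$ when that is at most $K_q$, or $K_q$ itself. Once that is confirmed, the contradiction is immediate.
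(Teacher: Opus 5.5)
Your proof is correct and is essentially the paper's argument: both use the necessity of condition~(i) of Proposition~\ref{prop:rank} for each system separately. That forces $i$ above $j$ under $q_1$ and $j$ above $i$ under $q_2$, which no strict ranking can do. One small imprecision in your optional self-contained variant: when $j$'s position exceeds $K_{q_1}$, the failure at budget $K_{q_1}$ comes from $i$ being excluded from the top $K_{q_1}$ inputs, not from a literal swap of $j$ for $i$ (neither input is selected there). This does not affect your main argument.
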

\emph{Proof.}
Since $i$ and $j$ are positive under both systems, condition (i) of Proposition~\ref{prop:rank} requires an optimal ranking under $q_1$ to place $i$ above $j$ and one under $q_2$ to do the reverse, which no strict ranking does.

\subsection{Regret from a top-\texorpdfstring{$k$}{k} mismatch}
For $k\le K_q$, take an oracle selection $\mathcal{A}_q^*(k)$ of size $k$ whose zero-valued members are drawn from $\mathcal{A}_S(k)$ first, so that the missed-oracle set $\mathcal{U}=\mathcal{A}_q^*(k)\setminus\mathcal{A}_S(k)$ and the wrongly selected set $\mathcal{W}=\mathcal{A}_S(k)\setminus\mathcal{A}_q^*(k)$ have equal size $u$ and never both contain a zero-valued input. If every missed oracle input exceeds every wrongly selected input by at least $\delta_V>0$, the regret satisfies
\begin{equation}
\mathcal{R}_q(S,k)=\sum_{i\in\mathcal{A}_q^*(k)}V_i^q-\sum_{i\in\mathcal{A}_S(k)}V_i^q=\sum_{r=1}^{u}\bigl(V_{\mathcal{U}_r}^q-V_{\mathcal{W}_r}^q\bigr)\ge u\,\delta_V .
\end{equation}
Top-$k$ disagreement therefore matters most when the decision-value margin at the budget boundary is large, which is why aggregate correlation is a poor summary of allocation quality.

\section{Cross-Target Diagnostic Analysis}
\label{app:planning}

This appendix evaluates each score as one fixed function against several downstream targets on the same nuScenes frames, transition and geometry, over all units. It is an analysis of what the diagnostic signals know, not a benchmark result; the official held-out results are in Table~\ref{tab:heldout-main}.

\paragraph{Self-reference gap.} For a planning-aware metric $\mathcal{M}_r$ computed through the learned planner, with gain $H^r$, define $\Gamma_{\mathrm{self}}(r)=1-\ndg_{q_{\mathrm{plan}}}(H^r,k)/\ndg_{q_{\mathrm{plan}}^{\,\mathrm{self}}}(H^r,k)$, which compares scoring the planner against the logged trajectory with scoring it against its own reference-conditioned output; a descriptive contrast that is unbounded above. On the 2{,}655 nuScenes frames with a logged future and a 20\% budget, with both nDG values computed on those frames ($0.893$ against the self-reference for PKL), $\Gamma_{\mathrm{self}}(\mathrm{PKL})=0.60$ $[0.40,0.84]$ and $\Gamma_{\mathrm{self}}(\mathrm{TIP})=0.56$ under oracle geometry, and $0.47$ and $0.46$ under monocular geometry. PKL shares model, weights and forward pass with $q_{\mathrm{plan}}^{\,\mathrm{self}}$, so a frame whose predicted heatmap moves affects both quantities by construction; measured against the logged trajectory the remaining signal is $0.36$, against $0.06$ for random. PKL is designed to measure consequences through its planner; $\Gamma_{\mathrm{self}}$ contrasts its allocation value against that planner's own output with its value against the logged outcome.

\begin{figure}[htbp]
\centering
\begin{tikzpicture}
\pgfplotsset{
  etastyle/.style={
    width=5.0cm, height=3.7cm,
    xlabel={\scriptsize budget (\%)},
    xmin=7, xmax=53, ymin=-0.08, ymax=1.02,
    xtick={10,20,30,50}, ytick={0,0.2,0.4,0.6,0.8,1.0},
    grid=major, grid style={gray!18},
    axis line style={gray!55}, tick style={gray!55},
    tick label style={font=\scriptsize}, title style={font=\scriptsize},
    mark size=1.3pt, line width=0.8pt,
  }
}
\begin{axis}[etastyle, name=p1,
  ylabel={\scriptsize nDG (share of oracle value)},
  title={braking controller $q_{\mathrm{brake}}$}]
\addplot[cRisk, mark=*] coordinates {(10,0.399)(20,0.575)(30,0.572)(50,0.561)};
\addplot[cDE, mark=square*] coordinates {(10,0.103)(20,0.171)(30,0.328)(50,0.563)};
\addplot[cPKL, mark=triangle*] coordinates {(10,0.179)(20,0.242)(30,0.312)(50,0.393)};
\addplot[cTIP, mark=diamond*] coordinates {(10,0.166)(20,0.173)(30,0.201)(50,0.266)};
\addplot[cRand, densely dotted, mark=none] coordinates {(10,0.063)(20,0.122)(30,0.192)(50,0.327)};
\end{axis}
\begin{axis}[etastyle, name=p2, at={(p1.east)}, anchor=west, xshift=5mm,
  ylabel={}, yticklabels={,,}, title={learned planner $q_{\mathrm{plan}}$}]
\addplot[cRisk, mark=*] coordinates {(10,0.090)(20,0.183)(30,0.196)(50,0.212)};
\addplot[cDE, mark=square*] coordinates {(10,0.133)(20,0.228)(30,0.223)(50,0.184)};
\addplot[cPKL, mark=triangle*] coordinates {(10,0.386)(20,0.359)(30,0.389)(50,0.399)};
\addplot[cTIP, mark=diamond*] coordinates {(10,0.375)(20,0.376)(30,0.370)(50,0.373)};
\addplot[cRand, densely dotted, mark=none] coordinates {(10,0.038)(20,0.061)(30,0.101)(50,0.160)};
\end{axis}
\begin{axis}[etastyle, name=p3, at={(p2.east)}, anchor=west, xshift=5mm,
  ylabel={}, yticklabels={,,}, title={same planner, self-reference},
  legend style={font=\scriptsize, at={(-0.82,-0.40)}, anchor=north, legend columns=5,
                draw=gray!40, inner sep=2pt,
                /tikz/every even column/.append style={column sep=5pt}}]
\addplot[cRisk, mark=*] coordinates {(10,0.386)(20,0.455)(30,0.462)(50,0.514)};
\addplot[cDE, mark=square*] coordinates {(10,0.208)(20,0.328)(30,0.398)(50,0.523)};
\addplot[cPKL, mark=triangle*] coordinates {(10,0.820)(20,0.869)(30,0.891)(50,0.894)};
\addplot[cTIP, mark=diamond*] coordinates {(10,0.854)(20,0.850)(30,0.860)(50,0.870)};
\addplot[cRand, densely dotted, mark=none] coordinates {(10,0.068)(20,0.123)(30,0.171)(50,0.277)};
\legend{risk-weighted gain, false-negative gain, PKL gain, TIP gain, random}
\end{axis}
\end{tikzpicture}
\caption{\textbf{Perception-level signals recover part of the decision value, and the best one depends on the downstream system.} nDG on nuScenes (YOLOv8s 320$\to$640, oracle geometry, all units) when escalation is ranked by each signal, computed exactly after escalation. Right: the planner scored against its own reference-conditioned output.}
\label{fig:eta}
\end{figure}

\begin{table}[htbp]
\caption{Cross-target diagnostic analysis: nDG at a 20\% budget on nuScenes, oracle geometry, all units. Only the first three rows are deployable; the gate is fit out of fold by scene. The $q_{\mathrm{plan}}^{\,\mathrm{self}}$ column uses all 3{,}376 frames, the $q_{\mathrm{plan}}$ column the 2{,}655 frames with a logged future; the self-reference gap in the text recomputes $q_{\mathrm{plan}}^{\,\mathrm{self}}$ on those 2{,}655 frames. Bold: best diagnostic score in each column.}
\label{tab:target}
\centering
\small
\begin{tabular}{llrrr}
\toprule
& & Braking & \multicolumn{2}{c}{Published learned planner} \\
\cmidrule(lr){4-5}
Allocation score & class & $q_{\mathrm{brake}}$ & $q_{\mathrm{plan}}$ & $q_{\mathrm{plan}}^{\,\mathrm{self}}$ \\
\midrule
Random (16 seeds) & deployable & 0.122 & 0.061 & 0.123 \\
Cheap-detection uncertainty & deployable & 0.005 & 0.134 & 0.096 \\
Pre-escalation gate, GBM & deployable & 0.309 & 0.079 & --- \\
Criticality (reference geometry) & diagnostic & 0.182 & 0.016 & 0.219 \\
False-negative gain $G^{\mathcal{E}_{\mathrm{FN}}}$ & diagnostic & 0.171 & 0.228 & 0.328 \\
Risk-weighted gain $G^{\mathcal{E}_{\mathrm{risk}}}$ & diagnostic & \textbf{0.575} & 0.183 & 0.455 \\
PKL gain $H^{\mathrm{PKL}}$ & diagnostic & 0.242 & 0.359 & \textbf{0.869} \\
TIP gain $H^{\mathrm{TIP}}$ & diagnostic & 0.173 & \textbf{0.376} & 0.850 \\
Target decision oracle & label & 1.000 & 1.000 & 1.000 \\
\bottomrule
\end{tabular}
\end{table}

On $q_{\mathrm{brake}}$ and the logged-trajectory $q_{\mathrm{plan}}$, no diagnostic approaches the oracle: the best are the risk-weighted gain at $0.575$ $[0.32,0.75]$ on $q_{\mathrm{brake}}$ and TIP at $0.376$ $[0.12,0.54]$ on $q_{\mathrm{plan}}$. The best score also changes with the downstream system (Figure~\ref{fig:eta}): $\mathcal{E}_{\mathrm{risk}}$ leads on $q_{\mathrm{brake}}$, against PKL's $0.242$, TIP leads on the logged-trajectory target and PKL on the self-referenced one. Under deployed monocular geometry the risk-weighted gain falls to $0.227$ on the braking target, since escalation also changes the estimated range of what it detects, which a detection-set score cannot see; PKL and TIP, which read the lifted detections through a planner, reach $0.289$ and $0.271$ against $0.135$ for random.

\subsection{Longitudinal braking controller}
On the full 3{,}376-frame set at a 20\% budget, random routing reaches $0.122$, uncertainty $0.005$, criticality $0.182$, false-negative gain $0.171$, the risk-weighted gain $0.575$, and PKL and TIP $0.242$ $[0.01,0.43]$ and $0.173$ $[-0.14,0.41]$. The oracle reduces the all-cheap loss by 24.07\% against 14.01\% for all-full inference; of the 141 frames with non-zero decision value, 91 are helped and 50 harmed.

\subsection{Learned planner against the logged trajectory}
On 2{,}655 frames, $q_{\mathrm{plan}}$ has non-zero decision value on 1{,}179, of which 600 (50.9\%) are harmed by escalation. The oracle with a 20\% budget reduces the all-cheap loss by 4.41\% against 1.31\% for all-full inference. PKL reaches $\ndg=0.359$ $[0.13,0.54]$ and TIP $0.376$ $[0.12,0.54]$, against $0.228$ for the best perception-based score and $0.061$ for random.

\subsection{Learned planner against its own reference-conditioned output}
The self-referenced control has non-zero decision value on 1{,}413 of 3{,}376 frames, of which 578 (40.9\%) are harmed. The oracle with a 20\% budget reduces the loss by 13.67\% against 7.49\% for all-full inference. PKL reaches $\ndg=0.869$ $[0.82,0.91]$ and TIP $0.850$ $[0.79,0.89]$. This condition serves the self-reference gap above and is not a downstream system of independent interest.

\subsection{Lateral corridor controller}
The lateral corridor controller enters only the robustness sweep of Appendix~\ref{app:robust}; on nuScenes its decision value is non-zero on 23 frames, too few for scene-level intervals.

\subsection{TIP score direction}
\label{app:tip}
PKL is higher when worse, so its gain is $H^{\mathrm{PKL}}=\mathrm{PKL}_{\mathsf{c}}-\mathrm{PKL}_{\mathsf{f}}$. The released TIP implementation computes a non-positive preference-gap quantity, with more negative values indicating worse agreement, so, following the sign convention of the released implementation, its gain is $H^{\mathrm{TIP}}=\mathrm{TIP}_{\mathsf{f}}-\mathrm{TIP}_{\mathsf{c}}$. Under this convention TIP gain correlates positively with perception gain on the wiring-validation subset, as PKL gain does, which confirms the direction.

\section{Sign Variation Across Configurations}
\label{app:sign}

\begin{table}[htbp]
\caption{Sign-varying decision value and harm ratio across detector families, fidelity gaps and geometries. Loss reductions are relative to all-cheap inference; Oracle@20 has capacity up to 20\% of inputs. Every KITTI row is budget-slack, so $\rho_q=1-{}$all-full$/$Oracle@20 there. $\rho_q$ is smallest for the largest fidelity gap under oracle geometry (row 5), where escalation is almost always beneficial.}
\label{tab:sign}
\centering
\small
\setlength{\tabcolsep}{4pt}
\begin{tabular}{llrrrrr}
\toprule
Downstream system & Pair & Geometry & Affected & Harmed & $\rho_q$ & All-full / Oracle@20 \\
\midrule
nuScenes, $q_{\mathrm{brake}}$ & Y8 320$\to$640 & oracle & 141 & 35.5\% & 0.42 & 14.01 / 24.07\% \\
nuScenes, $q_{\mathrm{plan}}$ & Y8 320$\to$640 & oracle & 1{,}179 & 50.9\% & 0.70 & \phantom{0}1.31 / \phantom{0}4.41\% \\
nuScenes, $q_{\mathrm{plan}}^{\,\mathrm{self}}$ & Y8 320$\to$640 & oracle & 1{,}413 & 40.9\% & 0.46 & \phantom{0}7.49 / 13.67\% \\
KITTI, $q_{\mathrm{traj}}$ & Y8 320$\to$640 & mono & \phantom{0,}678 & 48.4\% & 0.28 & 12.78 / 17.81\% \\
KITTI, $q_{\mathrm{traj}}$ & Y8 320$\to$640 & oracle & \phantom{0,}463 & \phantom{0}3.0\% & 0.003 & 68.34 / 68.58\% \\
KITTI, $q_{\mathrm{traj}}$ & Y8 384$\to$640 & mono & \phantom{0,}458 & 48.7\% & 0.46 & \phantom{0}5.97 / 11.03\% \\
KITTI, $q_{\mathrm{traj}}$ & Y8 512$\to$640 & mono & \phantom{0,}266 & 53.8\% & 1.11 & $-0.59$ / \phantom{0}5.16\% \\
KITTI, $q_{\mathrm{traj}}$ & RT 480$\to$640 & mono & \phantom{0,}295 & 43.4\% & 0.58 & \phantom{0}2.94 / \phantom{0}6.98\% \\
\bottomrule
\end{tabular}
\end{table}

\paragraph{Persistence and oracle geometry.} Two controls on KITTI test the sensitivity of the sign variation to transient detections and to monocular lifting. Passing a detection to the controller only if a detection of the same class with image IoU of at least 0.3 appears in each of the two previous frames of the same mode adds 0.2\,s of latency at 10\,Hz, drops 26--32\% of detections, and moves the harmed share by at most 6.8 points across the twelve settings; only the braking controller at YOLOv8s 320$\to$640 sees $\rho_q$ fall by a third or more, from 0.17 to 0.11 under monocular and from 0.08 to 0.05 under oracle geometry. Table~\ref{tab:refgeom} extends oracle geometry to every fidelity pair. The braking controller's harm persists when matched detections carry reference geometry, at 25--48\% of affected inputs. The trajectory controller's harm ratio collapses on four of five pairs while 3--37\% of its affected inputs stay harmed, so range error on real objects drives the size of its harm more than its frequency. Oracle geometry replaces the geometry of matched detections only: 73\% of KITTI and 71\% of nuScenes frames contain an unmatched detection, which keeps its monocular geometry.

\begin{table}[htbp]
\caption{Harm with deployed monocular and oracle geometry on every KITTI fidelity pair, over all units. Harmed: share of affected inputs made worse by the full mode (\%); $\rho_q$: harm ratio.}
\label{tab:refgeom}
\centering
\small
\setlength{\tabcolsep}{3.5pt}
\begin{tabular}{lrrrrrrrr}
\toprule
& \multicolumn{4}{c}{$q_{\mathrm{brake}}$} & \multicolumn{4}{c}{$q_{\mathrm{traj}}$} \\
\cmidrule(lr){2-5}\cmidrule(lr){6-9}
& \multicolumn{2}{c}{Harmed} & \multicolumn{2}{c}{$\rho_q$} & \multicolumn{2}{c}{Harmed} & \multicolumn{2}{c}{$\rho_q$} \\
Pair & mono & oracle & mono & oracle & mono & oracle & mono & oracle \\
\midrule
YOLOv8s 320$\to$640 & 34.2 & 24.6 & 0.168 & 0.080 & 48.4 & 3.0 & 0.282 & 0.003 \\
YOLOv8s 384$\to$640 & 39.8 & 35.3 & 0.473 & 0.295 & 48.7 & 5.1 & 0.459 & 0.002 \\
YOLOv8s 512$\to$640 & 42.9 & 39.7 & 0.615 & 0.467 & 53.8 & 17.0 & 1.114 & 0.051 \\
RT-DETR-l 320$\to$640 & 44.2 & 47.5 & 0.701 & 0.691 & 49.7 & 19.9 & 0.762 & 0.023 \\
RT-DETR-l 480$\to$640 & 42.2 & 47.1 & 0.720 & 0.884 & 43.4 & 36.8 & 0.579 & 0.141 \\
\bottomrule
\end{tabular}
\end{table}

\section{Per-Mode Operating Points}
\label{app:calib}

All other results use one detection threshold, 0.25, for both modes. Here each fidelity has its own operating point, selected on the training units of each track from precision and recall only, never from decision value. \textbf{S1} sets the full-mode threshold so that it emits as many boxes per frame as the cheap mode; \textbf{S2} places each mode at its F1 optimum; \textbf{S3} uses the lowest full-mode threshold whose precision is at least that of the cheap mode. Under S1 on nuScenes, the full mode goes from 4.18 to 2.59 boxes per frame and from precision 0.58 to 0.76, above the cheap mode's 0.65. Table~\ref{tab:calib} gives harm rate and $\rho_q$ over all units. Under each scheme, every nuScenes cell and every moderate-gap KITTI cell keeps a harm rate of at least 34\% and $\rho_q\ge 0.33$. Among the KITTI 320$\to$640 cells, which Table~\ref{tab:calib} omits, the braking and oracle-geometry cells keep $\rho_q$ below 0.18 under every scheme, while the monocular trajectory cell has $\rho_q$ of 0.28--0.35. Across all 25 threshold pairs in \mbox{$\{0.15,\dots,0.55\}^2$} the lowest nuScenes harm rate is 24.4\%, and thresholds tuned on the training and validation units to each mode's own downstream loss give harm rates of 33--49\% and $\rho_q$ of 0.44--0.84 in the nuScenes cells and the moderate-gap KITTI braking cells.

\begin{table}[htbp]
\caption{Harm rate and harm ratio $\rho_q$ with a shared threshold (S0) and with per-mode operating points (S1--S3), over all units. KITTI rows use monocular geometry.}
\label{tab:calib}
\centering
\small
\setlength{\tabcolsep}{4pt}
\begin{tabular}{lcccc}
\toprule
Cell & S0 (shared) & S1 (equal boxes) & S2 (F1) & S3 (precision) \\
\midrule
nuScenes oracle, $q_{\mathrm{brake}}$ & 35.5\% / 0.42 & 38.5\% / 0.54 & 34.5\% / 0.39 & 34.1\% / 0.41 \\
nuScenes oracle, $q_{\mathrm{plan}}$ & 50.9\% / 0.70 & 50.2\% / 0.78 & 50.1\% / 0.71 & 50.2\% / 0.72 \\
nuScenes mono, $q_{\mathrm{brake}}$ & 39.6\% / 0.41 & 36.2\% / 0.34 & 37.0\% / 0.34 & 37.5\% / 0.37 \\
nuScenes mono, $q_{\mathrm{plan}}$ & 49.3\% / 0.71 & 49.6\% / 0.83 & 49.3\% / 0.73 & 49.2\% / 0.71 \\
\midrule
KITTI Y8 384$\to$640, $q_{\mathrm{traj}}$ & 48.7\% / 0.46 & 48.1\% / 0.48 & 48.2\% / 0.47 & 48.6\% / 0.47 \\
KITTI Y8 384$\to$640, $q_{\mathrm{brake}}$ & 39.8\% / 0.47 & 34.9\% / 0.44 & 37.1\% / 0.45 & 38.8\% / 0.47 \\
KITTI Y8 512$\to$640, $q_{\mathrm{traj}}$ & 53.8\% / 1.11 & 53.6\% / 1.15 & 52.9\% / 1.06 & 53.6\% / 1.11 \\
KITTI Y8 512$\to$640, $q_{\mathrm{brake}}$ & 42.9\% / 0.62 & 41.6\% / 0.62 & 41.2\% / 0.60 & 42.8\% / 0.62 \\
KITTI RT 480$\to$640, $q_{\mathrm{traj}}$ & 43.4\% / 0.58 & 43.4\% / 0.58 & 46.6\% / 0.52 & 43.4\% / 0.58 \\
KITTI RT 480$\to$640, $q_{\mathrm{brake}}$ & 42.2\% / 0.72 & 41.9\% / 0.74 & 41.9\% / 0.84 & 42.2\% / 0.72 \\
\bottomrule
\end{tabular}
\end{table}

\section{Benchmark Reference Results}
\label{app:bench}

\paragraph{Held-out allocation.} Table~\ref{tab:heldout} gives every deployable baseline at a 20\% budget on all fourteen held-out cells, including the final-displacement variant of $q_{\mathrm{plan}}$. Out of fold over all units the same gates look stronger --- $0.510$ on nuScenes braking under monocular geometry, for instance --- and those numbers are released but not scored, since the gates were fit on those units. Across all four budgets, on realized decision gain with every bootstrap draw kept, the ridge and gradient-boosted gates beat random in 4 and 10 rows, the detection-list router variants in 9--11 each, mostly on KITTI, and the pixel router in none (Table~\ref{tab:routers}). Among the untrained signals, ego speed beats random in ten rows, on both KITTI braking cells at every budget from 20\% up, on KITTI's trajectory controller under oracle geometry at 30 and 50\%, and on nuScenes braking under oracle geometry and PDM-Closed's scalar loss at 50\%; cheap-side criticality and uncertainty do so once each, on nuScenes braking and on KITTI's trajectory controller under monocular geometry at 30\%.

The detection-list router is a two-layer MLP with 64 units per layer, or the benchmark's gradient-boosted model, on the 25 most confident cheap detections (the 25 nearest cheap tracks on nuPlan), with $V$ or $\mathbf{1}[V>0]$ as target. The pixel router is a MobileNetV2 of 0.15\,GFLOPs on a $128\times128$ frame, trained from scratch on $\mathbf{1}[V>0]$ and served with TensorRT FP16. Both are adaptations of published detector routers rather than reimplementations, and differ from them in ways that matter here. ORIC estimates a per-image detection reward \citep{qiu2024edge}; our detection-list router estimates decision value in its regression variants and $P(V>0)$ in its classification variants, refitted per downstream system; the published objectives themselves are evaluated on these architectures in the paragraph on published routing objectives below. Budget-adaptive routing skips the weak pass when the strong model answers anyway \citep{geng2026budget}; our cascade pays the cheap mode on every input, so the pixel router only decides whether the full mode is added, and its measured cost excludes the image decoding that the cheap pass already pays. Charged instead as a skipping router, where an escalated input runs the full mode in place of the cheap one, it can escalate 83.5\% of KITTI inputs at the 50\% latency budget against 23.9\% under the cascade, and is still rarely better than random: over the 8 latency and 30 module-energy cell and budget combinations where that share is positive, it beats random once in each, on KITTI's braking controller under oracle geometry at the 50\% budget, and loses in none and two, respectively. On nuPlan the detection-list router reads the cheap branch's object list rather than a detector's output, and most of its slots hold objects that escalation cannot change.

\begin{table}[htbp]
\caption{Router variants, nDG at a 20\% budget on the held-out test cells. Bold: the paired 95\% interval of realized decision gain against random, over every bootstrap draw, lies above zero; asterisk: below zero. nuPlan rows use real detector outcomes; the pixel router does not apply to nuPlan.}
\label{tab:routers}
\centering
\small
\setlength{\tabcolsep}{4pt}
\begin{tabular}{lrrrrr}
\toprule
& \multicolumn{4}{c}{Detection-list router} & \\
\cmidrule(lr){2-5}
Cell & MLP, $V$ & MLP, $V{>}0$ & GBM, $V$ & GBM, $V{>}0$ & Pixel router \\
\midrule
nuScenes oracle, $q_{\mathrm{brake}}$ & 0.354 & 0.354 & 0.196 & 0.421 & $-0.048$ \\
nuScenes oracle, planner ADE & $-0.079$ & $-0.074$ & $-0.083$ & 0.031 & $-0.163$ \\
nuScenes oracle, planner FDE & 0.037 & $-0.047$ & $-0.093$ & 0.065 & $-0.034$ \\
nuScenes mono, $q_{\mathrm{brake}}$ & \textbf{0.366} & 0.255 & \textbf{0.383} & 0.369 & 0.215 \\
nuScenes mono, planner ADE & 0.250 & 0.102 & $-0.022^{*}$ & 0.090 & 0.082 \\
nuScenes mono, planner FDE & 0.260 & 0.065 & $-0.062$ & 0.131 & 0.066 \\
\midrule
KITTI oracle, $q_{\mathrm{brake}}$ & 0.255 & \textbf{0.313} & 0.293 & \textbf{0.360} & 0.177 \\
KITTI oracle, $q_{\mathrm{traj}}$ & 0.356 & \textbf{0.422} & \textbf{0.516} & \textbf{0.484} & 0.199 \\
KITTI mono, $q_{\mathrm{brake}}$ & \textbf{0.178} & \textbf{0.172} & \textbf{0.201} & \textbf{0.215} & 0.093 \\
KITTI mono, $q_{\mathrm{traj}}$ & 0.107 & 0.127 & 0.125 & 0.059 & $-0.021$ \\
\midrule
nuPlan PDM-Closed, safety & 0.143 & $-0.067$ & 0.072 & 0.069 & --- \\
nuPlan PDM-Closed, scalar & 0.141 & 0.278 & 0.065 & 0.072 & --- \\
nuPlan IDM, safety & \multicolumn{5}{c}{too few affected test states (4)} \\
nuPlan IDM, scalar & 0.031 & 0.938 & 0.000 & 0.059 & --- \\
\bottomrule
\end{tabular}
\end{table}

\begin{table}[htbp]
\caption{Deployable baselines, nDG at a 20\% budget on all held-out test cells. Bold: the paired 95\% interval of realized decision gain against random, over every bootstrap draw, lies above zero; asterisk: below zero. nuPlan rows use the real-perception track; $^\dagger$no uncertainty signal is defined on it. Planner rows are $q_{\mathrm{plan}}$ with average (ADE) and final (FDE) displacement.}
\label{tab:heldout}
\centering
\small
\setlength{\tabcolsep}{3pt}
\begin{tabular}{lrrrrrr}
\toprule
Cell & Random & Ego speed & Uncertainty & Criticality & Gate, ridge & Gate, GBM \\
\midrule
nuScenes oracle, $q_{\mathrm{brake}}$ & 0.148 & 0.208 & 0.015 & 0.470 & 0.254 & 0.155 \\
nuScenes oracle, planner ADE & $-0.006$ & $-0.021$ & $-0.117$ & $-0.272^{*}$ & $-0.150$ & $-0.141$ \\
nuScenes oracle, planner FDE & 0.059 & 0.024 & $-0.016$ & $-0.142$ & 0.030 & 0.125 \\
nuScenes mono, $q_{\mathrm{brake}}$ & 0.132 & 0.148 & 0.088 & 0.407 & 0.293 & \textbf{0.482} \\
nuScenes mono, planner ADE & 0.092 & $-0.015$ & 0.013 & $-0.060$ & 0.063 & 0.042 \\
nuScenes mono, planner FDE & 0.080 & 0.007 & $-0.059$ & $-0.048$ & $-0.043^{*}$ & 0.079 \\
\midrule
KITTI oracle, $q_{\mathrm{brake}}$ & 0.139 & \textbf{0.584} & $-0.010^{*}$ & $-0.013^{*}$ & 0.238 & 0.226 \\
KITTI oracle, $q_{\mathrm{traj}}$ & 0.199 & 0.795 & $0.031^{*}$ & $0.015^{*}$ & 0.455 & \textbf{0.538} \\
KITTI mono, $q_{\mathrm{brake}}$ & 0.079 & \textbf{0.415} & 0.035 & 0.074 & 0.156 & 0.158 \\
KITTI mono, $q_{\mathrm{traj}}$ & $-0.091$ & $-0.691$ & 0.148 & 0.178 & 0.298 & 0.184 \\
\midrule
nuPlan PDM-Closed, safety & 0.128 & 0.205 & ---$^\dagger$ & 0.070 & 0.992 & 0.992 \\
nuPlan PDM-Closed, scalar & 0.125 & 0.201 & ---$^\dagger$ & 0.066 & \textbf{0.988} & 0.989 \\
nuPlan IDM, safety & \multicolumn{6}{c}{too few affected test states (4)} \\
nuPlan IDM, scalar & 0.185 & $-0.076$ & ---$^\dagger$ & 0.008 & 0.056 & 0.000 \\
\bottomrule
\end{tabular}
\end{table}

On nuPlan, all pre-escalation inputs are rebuilt from the cheap branch with real detections, including its false positives. The test split contains 27 and 48 affected states for PDM-Closed under the safety and scalar losses, and 4 and 10 for IDM. Most of PDM-Closed's test prize lies in one log and one scenario, in which the ego moves faster (8.5 against 3.9\,m/s) among many objects; the gates rank that scenario's states first, which is what their near-perfect nDG reflects, and bootstrap draws that omit that log carry little prize.

\begin{table}[htbp]
\caption{Diagnostic signals, nDG at a 20\% budget on the held-out test cells. Each column is one fixed signal, not a per-cell selection. Bold: the paired 95\% interval of realized decision gain against random, over every bootstrap draw, lies above zero; asterisk: below zero. These signals require the full output or the reference state. PKL and TIP are defined only on nuScenes. On the real-perception nuPlan track the false-negative gain counts missed tracks; neither it nor the risk-weighted gain sees the false positives that drive most of PDM-Closed's decision value.}
\label{tab:diag-heldout}
\centering
\small
\setlength{\tabcolsep}{4pt}
\begin{tabular}{lrrrrr}
\toprule
Cell & Criticality (reference) & $G^{\mathcal{E}_{\mathrm{FN}}}$ & $G^{\mathcal{E}_{\mathrm{risk}}}$ & PKL & TIP \\
\midrule
nuScenes oracle, $q_{\mathrm{brake}}$ & 0.287 & 0.258 & \textbf{0.687} & 0.461 & 0.365 \\
nuScenes oracle, planner ADE & $-0.195$ & 0.123 & $-0.085$ & $-0.132$ & $-0.181$ \\
nuScenes oracle, planner FDE & $-0.005$ & 0.121 & $-0.036$ & 0.124 & $-0.035$ \\
nuScenes mono, $q_{\mathrm{brake}}$ & 0.323 & 0.210 & 0.330 & 0.215 & 0.110 \\
nuScenes mono, planner ADE & $-0.051^{*}$ & 0.350 & 0.404 & 0.126 & 0.117 \\
nuScenes mono, planner FDE & $-0.037$ & 0.264 & 0.246 & 0.077 & 0.168 \\
\midrule
KITTI oracle, $q_{\mathrm{brake}}$ & 0.072 & \textbf{0.269} & 0.178 & --- & --- \\
KITTI oracle, $q_{\mathrm{traj}}$ & $0.015^{*}$ & 0.534 & 0.057 & --- & --- \\
KITTI mono, $q_{\mathrm{brake}}$ & 0.084 & 0.098 & 0.056 & --- & --- \\
KITTI mono, $q_{\mathrm{traj}}$ & \textbf{0.404} & \textbf{0.473} & \textbf{0.400} & --- & --- \\
\midrule
nuPlan PDM-Closed, safety & 0.070 & 0.105 & 0.140 & --- & --- \\
nuPlan PDM-Closed, scalar & 0.068 & 0.096 & 0.136 & --- & --- \\
nuPlan IDM, safety & \multicolumn{5}{c}{too few affected test states (4)} \\
nuPlan IDM, scalar & 0.008 & 0.001 & 0.062 & --- & --- \\
\bottomrule
\end{tabular}
\end{table}

\begin{figure}[htbp]
\centering
\begin{tikzpicture}
\pgfplotsset{dstyle/.style={width=4.75cm, height=4.8cm,
  xmin=-0.17, xmax=0.66, ymin=0.4, ymax=7.6, ytick={1,2,3,4,5,6,7},
  xtick={0,0.2,0.4,0.6}, grid=major, grid style={gray!15}, axis line style={gray!55}, tick style={gray!55},
  tick label style={font=\scriptsize}, title style={font=\scriptsize}, xlabel={\scriptsize nDG at a 20\% budget}, clip=false}}
\begin{axis}[dstyle, name=d1, title={KITTI oracle, $q_{\mathrm{traj}}$},
  yticklabels={cheap uncertainty, cheap criticality, pixel router, gate (ridge), gate (GBM), router (MLP{,} $V$), router (MLP{,} $V{>}0$)}]
\draw[cRand, densely dotted, line width=0.9pt] (axis cs:0.199,0.4) -- (axis cs:0.199,7.6);
\dumb{7}{0.422}{0.369}{cPKL}
\dumb{6}{0.356}{0.302}{cOracle}
\dumb{5}{0.538}{0.017}{cPurple}
\dumbinf{4}{0.455}{cRisk}
\dumbinf{3}{0.199}{cBrown}
\dumb{2}{0.015}{0.001}{cDE}
\dumb{1}{0.031}{0.016}{cTIP}
\end{axis}
\begin{axis}[dstyle, name=d2, at={(d1.east)}, anchor=west, xshift=3mm, yticklabels={}, title={KITTI oracle, $q_{\mathrm{brake}}$}]
\draw[cRand, densely dotted, line width=0.9pt] (axis cs:0.139,0.4) -- (axis cs:0.139,7.6);
\dumb{7}{0.313}{0.279}{cPKL}
\dumb{6}{0.255}{0.190}{cOracle}
\dumb{5}{0.226}{0.016}{cPurple}
\dumbinf{4}{0.238}{cRisk}
\dumbinf{3}{0.177}{cBrown}
\dumb{2}{-0.013}{-0.004}{cDE}
\dumb{1}{-0.010}{-0.013}{cTIP}
\end{axis}
\begin{axis}[dstyle, name=d3, at={(d2.east)}, anchor=west, xshift=3mm, yticklabels={}, title={nuScenes mono, $q_{\mathrm{brake}}$}]
\draw[cRand, densely dotted, line width=0.9pt] (axis cs:0.132,0.4) -- (axis cs:0.132,7.6);
\dumb{7}{0.255}{0.297}{cPKL}
\dumb{6}{0.366}{0.264}{cOracle}
\dumb{5}{0.482}{0.157}{cPurple}
\dumbinf{4}{0.293}{cRisk}
\dumbinf{3}{0.215}{cBrown}
\dumb{2}{0.407}{0.184}{cDE}
\dumb{1}{0.088}{0.123}{cTIP}
\end{axis}
\node[font=\scriptsize, anchor=north] at ($(d2.south)+(0,-10.5mm)$) {%
  \tikz{\draw[black, line width=0.9pt] (0,0) circle (2pt);}~selection budget (allocator free) \quad
  \tikz{\fill[black] (0,0) circle (2pt);}~measured latency budget (allocator pays its cost) \quad
  \tikz{\draw[cRand, densely dotted, line width=0.9pt] (0,0) -- (0.5,0);}~random};
\end{tikzpicture}
\caption{\textbf{Charging allocator cost reorders the allocators.} nDG on the held-out split at a 20\% budget, when the allocator is free (open) and when its measured latency on the device is charged (filled; the gradient-boosted gate with batched inference). The feature-based gates lose their advantage because feature extraction consumes the budget (inf.: the allocator cannot run within it at all), while both MLP variants of the detection-list router, at about 3.5\% of a full pass, keep most of it; its gradient-boosted variants cannot run within any measured budget. Figure~\ref{fig:budget-curves} covers every budget level.}
\label{fig:budget}
\end{figure}

\begin{figure}[htbp]
\centering
\begin{tikzpicture}
\pgfplotsset{
  budstyle/.style={
    width=4.95cm, height=4.0cm,
    xmin=7, xmax=53, ymin=-0.15, ymax=1.0,
    xtick={10,20,30,50}, ytick={0,0.2,0.4,0.6,0.8,1.0},
    xlabel={\scriptsize latency budget (\%)},
    grid=major, grid style={gray!18},
    axis line style={gray!55}, tick style={gray!55},
    tick label style={font=\scriptsize}, title style={font=\scriptsize},
    mark size=1.2pt, line width=0.8pt,
  }
}
\begin{axis}[budstyle, name=b1, ylabel={\scriptsize nDG}, title={KITTI oracle, $q_{\mathrm{traj}}$}]
\addplot[cRand, densely dotted, mark=none] coordinates {(10,0.100)(20,0.199)(30,0.299)(50,0.499)};
\addplot[cTIP, mark=diamond*] coordinates {(10,-0.001)(20,0.016)(30,0.049)(50,0.083)};
\addplot[cDE, mark=square*] coordinates {(10,0.000)(20,0.001)(30,0.014)(50,0.015)};
\addplot[cPKL, mark=*, line width=1.4pt] coordinates {(10,0.213)(20,0.369)(30,0.601)(50,0.836)};
\addplot[cOracle, mark=o, line width=1.0pt] coordinates {(10,0.146)(20,0.302)(30,0.469)(50,0.722)};
\addplot[cRisk, dashed, mark=triangle*] coordinates {(10,nan)(20,nan)(30,0.206)(50,0.505)};
\addplot[cPurple, mark=triangle*] coordinates {(10,nan)(20,0.017)(30,0.424)(50,0.755)};
\addplot[cBrown, mark=x] coordinates {(10,nan)(20,nan)(30,-0.000)(50,0.301)};
\end{axis}
\begin{axis}[budstyle, name=b2, at={(b1.east)}, anchor=west, xshift=4mm, yticklabels={,,}, title={KITTI oracle, $q_{\mathrm{brake}}$}]
\addplot[cRand, densely dotted, mark=none] coordinates {(10,0.070)(20,0.139)(30,0.208)(50,0.348)};
\addplot[cTIP, mark=diamond*] coordinates {(10,-0.016)(20,-0.013)(30,0.043)(50,0.035)};
\addplot[cDE, mark=square*] coordinates {(10,0.005)(20,-0.004)(30,-0.013)(50,0.061)};
\addplot[cPKL, mark=*, line width=1.4pt] coordinates {(10,0.107)(20,0.279)(30,0.384)(50,0.598)};
\addplot[cOracle, mark=o, line width=1.0pt] coordinates {(10,0.096)(20,0.190)(30,0.366)(50,0.539)};
\addplot[cRisk, dashed, mark=triangle*] coordinates {(10,nan)(20,nan)(30,0.053)(50,0.316)};
\addplot[cPurple, mark=triangle*] coordinates {(10,nan)(20,0.016)(30,0.083)(50,0.494)};
\addplot[cBrown, mark=x] coordinates {(10,nan)(20,nan)(30,0.047)(50,0.203)};
\end{axis}
\begin{axis}[budstyle, name=b3, at={(b2.east)}, anchor=west, xshift=4mm, yticklabels={,,}, title={nuScenes mono, $q_{\mathrm{brake}}$},
  legend style={font=\scriptsize, at={(-0.62,-0.32)}, anchor=north, legend columns=4,
                draw=gray!40, inner sep=2pt, /tikz/every even column/.append style={column sep=5pt}}]
\addplot[cRand, densely dotted, mark=none] coordinates {(10,0.066)(20,0.132)(30,0.198)(50,0.330)};
\addplot[cTIP, mark=diamond*] coordinates {(10,0.023)(20,0.123)(30,0.271)(50,0.294)};
\addplot[cDE, mark=square*] coordinates {(10,0.052)(20,0.184)(30,0.410)(50,0.472)};
\addplot[cPKL, mark=*, line width=1.4pt] coordinates {(10,0.179)(20,0.297)(30,0.228)(50,0.219)};
\addplot[cOracle, mark=o, line width=1.0pt] coordinates {(10,0.127)(20,0.264)(30,0.333)(50,0.620)};
\addplot[cRisk, dashed, mark=triangle*] coordinates {(10,nan)(20,nan)(30,0.250)(50,0.337)};
\addplot[cPurple, mark=triangle*] coordinates {(10,nan)(20,0.157)(30,0.230)(50,0.493)};
\addlegendimage{cBrown, mark=x} %
\legend{random, cheap-detection uncertainty, cheap-side criticality, {router (MLP, $V{>}0$)}, {router (MLP, $V$)}, {gate (ridge)}, {gate (GBM, batched)}, pixel router}
\end{axis}
\end{tikzpicture}
\caption{\textbf{nDG across measured latency budgets.} Held-out split, the three core cells of Figure~\ref{fig:budget}, every budget level. Missing points are infeasible: the ridge gate cannot run within the 10 and 20\% budgets, the batched gradient-boosted gate within 10\%, and the pixel router within 10 and 20\% on KITTI and any budget on nuScenes. Both MLP variants of the detection-list router, at about 3.5\% of a full pass, keep their ranking skill.}
\label{fig:budget-curves}
\end{figure}

\paragraph{Typical uncertainty per cell.} Table~\ref{tab:power} gives the median half-width of the paired interval on realized decision value at a 20\% budget, as a share of the all-cheap loss. It describes the typical uncertainty of a comparison with random in each cell, not a minimum detectable effect, and whether a particular allocator beats random is decided by its own interval. Typical half-widths are below 2\% in the planner cells, 4--7\% in the braking cells, and 14--18\% in the three cells that carry the largest reported wins, KITTI's trajectory controller under oracle geometry and the two PDM-Closed losses. The gradient-boosted gate's advantage on KITTI's trajectory controller is 17.6\% of the all-cheap loss, and its own interval excludes zero; those of PDM-Closed's safety gates do not (below).

\begin{table}[htbp]
\caption{Typical uncertainty of paired comparisons with random at a 20\% budget: the median half-width, over all scored signals, of the paired 95\% interval on realized decision value, as a share of the all-cheap loss, with every bootstrap draw kept.}
\label{tab:power}
\centering
\small
\begin{tabular}{lrlr}
\toprule
Cell & Half-width & Cell & Half-width \\
\midrule
nuScenes oracle, planner ADE & 0.7\% & KITTI mono, $q_{\mathrm{brake}}$ & 3.7\% \\
nuPlan IDM, scalar & 1.0\% & nuScenes mono, $q_{\mathrm{brake}}$ & 5.8\% \\
nuScenes oracle, planner FDE & 1.1\% & nuScenes oracle, $q_{\mathrm{brake}}$ & 6.0\% \\
nuScenes mono, planner ADE & 1.1\% & KITTI oracle, $q_{\mathrm{brake}}$ & 6.5\% \\
nuPlan IDM, safety & 1.1\% & nuPlan PDM-Closed, scalar & 14.0\% \\
nuScenes mono, planner FDE & 1.7\% & nuPlan PDM-Closed, safety & 16.0\% \\
KITTI mono, $q_{\mathrm{traj}}$ & 2.3\% & KITTI oracle, $q_{\mathrm{traj}}$ & 17.6\% \\
\bottomrule
\end{tabular}
\end{table}

\paragraph{Streaming allocation.} The budget in Table~\ref{tab:heldout-main} is filled by ranking a whole test split, which no deployed allocator can do. Over the twelve pooled cells and six learned signals at a 20\% rate, with every bootstrap draw kept, a score threshold calibrated on the validation units and frozen before the test stream differs from that ranking in nDG by $0.00$ $[-0.05,+0.09]$, while 49 of the 72 rows miss the rate by more than five points, and KITTI escalates a median of 29--33\% of inputs at a 20\% target.

Enforcing the rate causally, with at most $\lfloor 1+\alpha t\rfloor$ escalations after $t$ inputs of a unit, lowers nDG by $0.09$ $[-0.17,+0.00]$, because escalations arrive in bursts that the cap truncates; the interval reaches zero, so the size of this cost is not resolved. Rate controllers tuned on validation units do not recover it under the pre-specified selection rule: an adaptive threshold that tracks the rate online cuts the rows missing it from 54 under the cap to 14 and still lowers nDG by $0.07$ $[-0.15,+0.02]$, and the best token bucket has unbounded capacity and so reduces to the cap. On random scores the cap costs about 0.01 and the adaptive threshold nothing, so what loss there is belongs to the learned rankings under a causal budget. What the selection track reports is therefore the ranking quality of a signal; turning a ranking into a causal allocation at a fixed rate remains an open part of the problem.

\paragraph{Ranking allocators by perception gain.} \bench{} scores a signal by the decision value it realizes; a perception-first evaluation would score the same signals by the perception gain they realize instead. Holding the protocol and the tie expectation fixed and changing only that objective, we compare the sets of optimal signals, with ties within $10^{-9}$, on the 52 cell--budget pairs whose decision value is defined on test. On the target-free deployable pool --- random, uncertainty, cheap-side criticality and ego speed, none of them fitted to either objective --- the two objectives share no optimum in 43 of the 52 pairs when the gain is the reduction in missed objects and in 44 when it is the risk-weighted gain, with a median Kendall $\tau$ of $-0.33$ between the two rankings. Adding the learned deployable signals gives 50 and 50; the pool with the perception diagnostics gives 45 and 42, where the perception objective's winner is partly definitional, since the signal equal to the gain being scored attains its optimum. These counts are descriptive.

To estimate what choosing by perception gain costs, each objective selects its winner from the target-free pool on the validation units, and the two winners are compared once on test by realized decision gain, with the signal pools, test statistic and decision rule specified in advance. Over the ten core cells and PDM-Closed's scalar loss (the other nuPlan cells have at most seven affected validation states), selecting by the missed-object gain lowers realized test gain by $0.074$ $[0.010,0.108]$ of the all-cheap loss at a 20\% budget, and by $0.039$--$0.091$ at the other budgets, each interval excluding zero. Under the risk-weighted gain the reduction at 20\%, $0.064$ $[-0.004,0.099]$, is not resolved. Most of the effect lies in KITTI under oracle geometry, where the decision objective selects ego speed and the perception objectives select uncertainty or cheap-side criticality; in four nuScenes cell--objective pairs both select the same signal.

\paragraph{Training on perception gain instead.} The objective can also be swapped at training time. We refit every learned allocator on a perception-gain label, with the same inputs, hyperparameters, seeds and training units, and score it against decision value; the comparison and its interpretation were specified in advance, and the same fitting procedure is used for the decision-value baselines.

The mean paired difference at a 20\% budget over the twelve pooled cells is $+0.16$ $[-0.04,+0.27]$ for the ridge gate and $+0.13$ $[-0.03,+0.23]$ for the gradient-boosted gate, and every interval includes zero, for all six architectures and all three perception-gain labels. Five of the six lean towards the decision-value target and the gradient-boosted regression router leans the other way. The effect is cell-specific rather than uniform: the decision-value target wins significantly in 10 of the 72 pooled cell and architecture pairs and the perception-gain target in 8: four are the gradient-boosted routers on PDM-Closed, where the two labels can be compared on only 11 and 28 training states, which is consistent with sparse decision-value supervision, and four are on the nuScenes planner. On the ten core cells the target moves the gates by $+0.052$ and $+0.049$. We therefore do not claim that the objective rather than the architecture produces the allocation advantage: Table~\ref{tab:heldout-main} reports what these architectures achieve when they are trained on decision value. This is the controlled counterpart of routers that predict perception quality, such as RTScale \citep{heo2022rtscale}, and, with the risk-weighted label, of a task-aware risk target \citep{antonante2023taskaware}: inputs, architecture and split are fixed and only the reward changes.

\paragraph{Published routing objectives.} The perception-gain labels above are perception losses, whereas the published detector routers optimize detection quality. We therefore also refit the four detection-list router variants and the pixel router on the two published objectives, taken from the papers and the authors' released implementations: ORIC's offloading reward, the gain in contextual mAP@0.5 (101-point interpolation, 1{,}000 sampled context frames) from replacing the weak detector's output on a frame with the strong detector's \citep{qiu2024edge}, and the objective of budget-adaptive routing, the change in dataset-wide AP@0.5 (one precision--recall curve, all-point interpolation) when one frame's weak output is swapped for the strong one while every other frame keeps its weak output \citep{geng2026budget}. Each enters in its published CDF-normalized form, with the normalization fitted on the fit units only, for the regression variants and in its binary form for the classification variants. Architecture, features, hyperparameters, seeds, training units and scorer are unchanged and only the label differs, so this evaluates the published objectives on our architectures rather than the published methods; AP is computed on the cached detections, which keep confidences down to 0.10, ORIC's reward at that floor and the budget-adaptive objective at its published threshold of 0.30; and the one component of a published method that is reimplemented, the budget-adaptive arbiter of \citet{geng2026budget}, picks the gradient-boosted estimator at every quota and so coincides with it. The nuPlan cells are excluded, since their observations carry no scored detections from which AP can be computed.

The choice of objective matters. Across the ten core cells and four quotas, a router trained on a published objective beats random where its decision-value counterpart does not in 24 rows, and the reverse holds in 84. The direction depends on the architecture: on the published objectives the gradient-boosted regression router is significantly ahead of its decision-value version in 10 and 11 of 40 rows and the pixel router in 2 and 2, while the MLP regression router falls behind in 4 rows under ORIC's reward. Decision-value training yields more wins against random overall, 40 against 12 and 15 across the five architectures, but no objective serves every architecture best.

\paragraph{Target transform.} The published objectives enter the regression routers in CDF-normalized form, whereas $V$ enters raw, so a difference between them may come from the objective or from the transform. To separate the two, we also refit both regression routers on the rank of $V$ and on a signed empirical CDF of $V$ ($F^{+}(V)$ for $V>0$, zero at zero, $F^{-}(V)-1$ for $V<0$, fitted on the fit units), with everything else unchanged (Figure~\ref{fig:transform}). This comparison was decided after the published-objective results were seen, and its intervals are not corrected for multiplicity.

For the MLP router, the deficit under ORIC's reward is consistent with a target-transform effect. Over the ten core cells at a 20\% budget, the reward trails raw $V$ by $0.139$ $[0.021,0.244]$ in nDG and rank$(V)$ trails it by $0.170$ $[0.048,0.244]$, while the reward and rank$(V)$ differ by only $+0.032$ $[-0.058,0.095]$. For the gradient-boosted router, no pooled comparison separates the AP-swap objective from a transform of $V$: it leads rank$(V)$ by $0.120$ $[-0.005,0.219]$ and the signed CDF of $V$ by $0.053$ $[-0.058,0.128]$, its own lead over raw $V$, $0.106$ $[-0.012,0.214]$, is not resolved, and the signed CDF recovers about half of that lead, as well as part of the published objectives' gain on the planner cells.

\begin{figure}[htbp]
\centering
\includegraphics{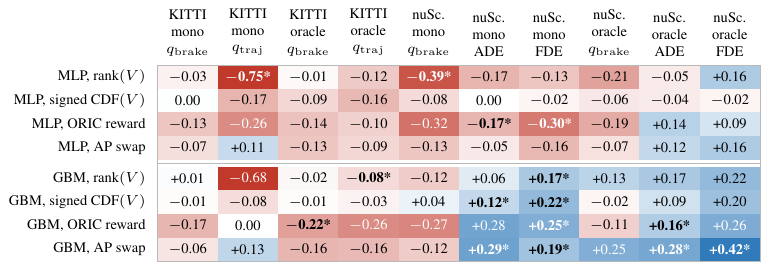}
\caption{\textbf{Target transform against training objective.} Difference in nDG from the same regression router trained on raw decision value, on the ten core cells at a 20\% budget; bold with an asterisk: paired 95\% interval excludes zero. Rank and signed CDF keep the decision-value objective and change only its transform; ORIC's reward and the AP-swap objective change the objective in its published CDF form. Post hoc, uncorrected for multiplicity.}
\label{fig:transform}
\end{figure}

\paragraph{Training seeds.} The bootstrap holds each fitted allocator fixed, so it contains no training-seed variation. Refitting with five further seeds leaves the gradient-boosted gate and routers bit-identical, since their fits use no randomness at these sample sizes, so each of their wins holds for every seed. The MLP routers vary: over six seeds their nDG at a 20\% budget spans up to $0.22$ within a cell for the regression router, $0.15$--$0.37$ on nuScenes braking under monocular geometry, and up to $0.28$ for the classification router. Of the five MLP wins on the core track in Table~\ref{tab:heldout-main}, counted by the filtered nDG interval for each seed, the classification router's on KITTI's trajectory controller under oracle geometry holds for all six seeds and each of the others for two or three.

\paragraph{Transfer between consumers.} Decision value differs between downstream systems, but an allocator fitted for one of them need not. Scoring each cached allocator against another consumer's decision value on that consumer's test split, the median transfer regret, the nDG of the transferred allocator minus that of the allocator fitted for the evaluated consumer, is $-0.009$ over the 528 off-diagonal entries, and with every draw kept 14 of 78 diagonal entries beat random at a 20\% budget against 9 of 132 off-diagonal ones. These are descriptive counts over dependent entries, not a paired test of fitting against transferring. Cached router scores exist only for the frames of the consumer they were fitted for, so 64 planner-to-braking entries cover 78.7\% of the braking frames; on the 464 off-diagonal entries with full coverage the median regret is $-0.004$ and 9 of 116 beat random at 20\%, so the counts do not come from missing scores.

On nuPlan the direction of transfer does: an allocator fitted for IDM realizes none of PDM-Closed's value, while the gradient-boosted gate fitted for PDM-Closed's safety loss reaches $0.94$ on IDM, where the same gate fitted for IDM reaches $0.00$. Consumer dependence is a property of the decision values, and the allocators recover part of it.

\paragraph{Unnormalized gain.} nDG divides by the oracle prize, which is smallest in exactly the cells whose value sits in one unit. The benchmark therefore judges significance on realized decision gain with every bootstrap draw kept. Of the 18 deployable rows other than ego speed whose filtered nDG interval lies above random at a 20\% budget, 14 also beat it on realized gain: the KITTI and nuScenes rows, and PDM-Closed's scalar loss under the ridge gate, whose advantage over random is 30.5\% of the all-cheap loss. The four that do not are the two PDM-Closed safety gates, the scalar-loss gradient-boosted gate, and IDM's detection-list router; for the three gates the lower bound is exactly zero, because draws without the dominant log carry no difference to random. The prize filter is one-sided in this sense: across the four budgets it adds 4 to 11 wins and removes none. Removing that log leaves the gates' ratio at $0.99$ and their realized gain at 12.0 of 162.9.

\paragraph{Uniform full fidelity.} A cascade pays $C_{\mathsf{c}}$ on every input, so a per-input budget also admits running the full mode alone, without the cheap pass, once $b\ge C_{\mathsf{f}}$, that is, once the escalation allowance $b-C_{\mathsf{c}}$ reaches $C_{\mathsf{f}}-C_{\mathsf{c}}$: from $\alpha=28.6\%$ on KITTI, 34.3\% on nuScenes and 39.0\% on nuPlan under the measured latencies. At the 10 and 20\% budgets of Figure~\ref{fig:budget-curves} it is infeasible. Where it is feasible, what it realizes is the gross benefit less the harm it absorbs: on KITTI's trajectory controller under oracle geometry, where harm is rare, it reaches $\ndg=0.998$ against $0.601$ and $0.836$ for the detection-list router at the 30 and 50\% budgets.

Charging each allocator its measured overhead, enforcing the budget causally and comparing loss reductions at the 50\% latency budget, uniform full fidelity is ahead of the best feasible allocator in eight of the ten core cells; the two exceptions, the learned planner under oracle geometry and KITTI's trajectory controller under monocular geometry, are the cells in which uniform escalation raises the loss on the test split. It is therefore the baseline to beat once it fits; when every beneficial input fits within the budget, the harm ratio is what an allocator can still recover from it.

\paragraph{Quota and capacity.} A ranking submission escalates exactly its $\mathrm{round}(\alpha N)$ highest-scoring inputs, while the oracle of Eq.~(\ref{eq:oracle}) may leave capacity unused. On every test cell at the reported budgets at least that many inputs have non-negative value, so the two oracles coincide there; the ranking track has no abstention, and allowing it would define a separate track.

\paragraph{Bootstrap draws.} A draw is dropped when its oracle prize falls below a quarter of the full-sample prize, since nDG is not comparable across draws with near-zero denominators. At a 20\% budget this removes at most 8 of 1{,}000 draws on nine of the ten core cells and 89 on KITTI's trajectory controller under oracle geometry, but 330 and 354 on the two PDM-Closed cells, whose prize rests on few states; their intervals are conditional on that filter. The bootstrap resamples test units with each fitted allocator held fixed, so it does not include the variation from refitting with other training seeds. Counts of cells in which a signal beats random are descriptive: the cells share frames, fidelities and allocators and are not independent replications.

\paragraph{Measured budgets.} Figure~\ref{fig:budget-curves} gives nDG for three core cells at every budget level. The allocator costs in Table~\ref{tab:overhead} come from one profiling run of every allocator on the device; energy is charged under one module convention for detectors and allocators alike, the GPU, CPU and SoC rails over idle, and every energy claim in this paragraph also holds when only the GPU rail or all rails are charged. On nuPlan, with its own detector costs, the batched gradient-boosted gate also beats random at the 20\% latency budget on PDM-Closed's scalar loss, escalating 5\% of states, and the classification-target detection-list router beats random on IDM's scalar loss at every latency and energy budget.

On the KITTI fidelity ladder at a 20\% budget, the best 640-only allocation reaches 0.73 of the multi-fidelity optimum on braking and 0.57 on the trajectory controller, so choosing among fidelities is worth about a third more than choosing whether to escalate on braking and three quarters more on the trajectory controller. Latency and energy disagree about the intermediate modes --- 384\,px costs 0.78 of 640\,px in milliseconds but 0.57 in millijoules --- so at a 10\% budget the energy-optimal braking plan sends more inputs to 384\,px than the latency-optimal one and exceeds the latency budget by 21\%. The deployable multi-fidelity gate pays three model evaluations per input, cannot run within budgets below 30\%, and reaches $\ndg=0.068$ on braking and $0.172$ on the trajectory controller at 50\%.

\paragraph{Benefit captured and harm incurred.} The gap between an allocator and the oracle at the same budget splits exactly into the positive value the allocator misses and the harm it selects, less the positive value the budget forces even the oracle to leave out. At a 20\% budget the last term is negligible on the core cells, so the gap is mostly missed benefit. Averaged over the ten core cells, as shares of the all-cheap loss, the oracle realizes $0.189$; ego speed captures a benefit of $0.092$ at a harm of $0.012$, the gates and detection-list routers $0.060$--$0.075$ at $0.007$--$0.012$, the pixel router $0.039$ at $0.015$, and random $0.038$ at $0.013$. Per cell, the ten non-random deployable signals incur significantly less harm than random in 11 of 100 signal--cell pairs and more in 12, and capture more benefit in 11 and less in 9.

\paragraph{Overhead a ranking can afford.} Scoring each cached ranking at every capacity with the scorer's own budget rule gives the overheads at which it still realizes at least random's gain. At a 20\% latency budget on the core cells, the median largest such overhead is 1.6--3.0\,ms for the learned allocators. The MLP routers' measured 0.65\,ms lies inside that set in eight and seven of the ten cells, the batched gradient-boosted gate's 3.5\,ms in three, and the gradient-boosted routers' 15.9\,ms, the single-call gate's 21.4\,ms and the ridge gate's 3.9\,ms in none. Uncertainty and cheap-side criticality cost little but beat random in few cells even at no cost. Because $V$ is signed, the set of affordable overheads need not be an interval; at the measured overheads every result equals the budget tables.

\paragraph{Where the gap to the oracle goes.} For the learned allocators, with train-only fits and validation-calibrated thresholds, we split the gap between the zero-overhead oracle and a causal allocator at the same latency budget into three terms: the oracle value lost to the allocator's overhead, the gap between that oracle and top-$k$ by the allocator's scores at the remaining capacity, and the difference between top-$k$ and the causal cap policy above. With one inference call per input, the gradient-boosted models cannot run at either the 20 or the 50\% budget. For the MLP routers at 20\%, over the ten core and two PDM-Closed cells, the gap averages $0.18$--$0.20$ of the all-cheap loss, of which $0.175$--$0.177$ is ranking, $0.007$--$0.021$ the causal cap, and $0.0001$ on average (at most $0.0012$ in any cell) overhead; at 50\% the ranking term is $0.094$--$0.125$ of a $0.138$--$0.156$ gap across the MLP routers and the ridge gate. The causal term is negative in 16 of the 62 feasible rows, where the cap skips harmful inputs that top-$k$ keeps, so the terms are an accounting of this protocol rather than losses or separate causes.

\begin{table}[htbp]
\caption{Share of inputs each deployable allocator can escalate under a measured per-input latency budget after paying for itself, on nuScenes / KITTI (median over cells). inf.: the allocator's own overhead exceeds the budget, $C_S>b-C_{\mathsf{c}}$, so it cannot run at all. Diagnostic signals require the full output or the reference state and are not deployable under measured budgets.}
\label{tab:overhead}
\centering
\small
\begin{tabular}{lrrr}
\toprule
Allocator & Overhead per input & Escalated at 20\% budget & at 50\% budget \\
\midrule
Random & 0\,ms & 20.0 / 20.0\% & 50.0 / 50.0\% \\
Cheap-detection uncertainty & 0.21--0.29\,ms & 18.5 / 18.8\% & 48.5 / 48.8\% \\
Cheap-side criticality & 1.30--1.47\,ms & 12.4 / 12.9\% & 42.4 / 42.9\% \\
Detection-list router, MLP & 0.65\,ms & 16.6 / 16.5\% & 46.6 / 46.5\% \\
Gate, ridge & 3.93\,ms & inf. / inf. & 29.7 / 28.7\% \\
Gate, GBM, batched inference & 3.54\,ms & 1.7 / 0.8\% & 31.7 / 30.8\% \\
Gate, GBM, one call per input & 21.4\,ms & inf. / inf. & inf. / inf. \\
Pixel router & 9.8 / 4.8\,ms & inf. / inf. & inf. / 23.9\% \\
\bottomrule
\end{tabular}
\end{table}

\section{External Planner Track}
\label{app:external}

\paragraph{Setup.} PDM-Closed and IDM were checked against their released configurations field by field, and an unmodified run of the nuPlan closed-loop simulator succeeded for both on the same scenarios; this run is a configuration check, and the evaluation below is statewise and open-loop. For each of 1{,}440 logged states (60 scenarios $\times$ 24 states), a fresh planner instance is built for each of three branches --- reference tracks, tracks the cheap detector would report, and tracks the full detector would report --- since PDM-Closed carries internal state. Each proposed trajectory is scored against the logged tracks over 0.5--4.0\,s by five losses: collision, clearance shortfall $c$ below 1\,m (clipped at 2\,m), mean deviation from the logged ego trajectory, the safety loss $10\cdot\mathbf{1}[\mathrm{collision}]+1.5\,c^2$, and the scalar loss, which adds $0.1$ times the mean deviation in meters to the safety loss.

\paragraph{Real detector outcomes.} We fetch the 12{,}921 front-camera images inside the benchmark's scenario windows and run the YOLOv8s engines at 320 and 640\,px; the image nearest each state is at most 41\,ms from its lidar sweep. Logged tracks are projected into the image with the camera calibration and matched to each mode's detections at threshold 0.25 by Hungarian assignment over class-compatible pairs with IoU of at least 0.3. A vehicle, pedestrian or bicycle track inside the image is kept only if that mode detects it; tracks outside the image and static classes pass through unchanged; and unmatched detections enter as zero-velocity false positives, lifted onto the road plane with a class-median size. Per state, about 18.5 tracks fall inside the image; the cheap mode keeps 5.4 and adds 0.83 false positives, and the full mode keeps 7.7 and adds 1.68. Dropping the unmatched detections, so that each mode only removes logged tracks, leaves PDM-Closed's safety loss changing on 6 states instead of 45. Recall is 0.29 at 320\,px and 0.42 at 640\,px, and 0.65 at both within 10\,m. A branch in which every in-image track is kept and no false positive is added reproduces the reference observation on every state, and the reference branch reproduces the stored reference values exactly.

\paragraph{Sensitivities.} For PDM-Closed under the safety loss, removing false positives leaves 6 affected states (harm 33\%); a full-mode threshold count-matched on the training logs (0.48) gives 35 affected states (harm 26\%, $\rho_q=0.25$); and IoU 0.5 matching gives 51 (harm 47\%, $\rho_q=0.37$). A synthetic alternative to this construction, a detection profile fitted on 46{,}469 KITTI object outcomes, predicts a recall gap of 0.34 between the two modes against the 0.12 measured here, and correspondingly denser decision value for both planners (Table~\ref{tab:external}).

\begin{table}[htbp]
\caption{All five losses for both published planners under real detector outcomes, over all 1{,}440 states, and the two aggregate losses under the KITTI detection profile. The losses are geometric and are not nuPlan's metric suite. The IDM collision row is a single affected state.}
\label{tab:external}
\centering
\small
\begin{tabular}{llrrrr}
\toprule
Planner & Loss & Affected & Harmed & $\rho_q$ & All-full / Oracle@20 \\
\midrule
\multicolumn{6}{l}{\emph{Real detector outcomes}} \\
PDM-Closed & collision & 24 & 25.0\% & 0.33 & 11.54 / 17.31\% \\
PDM-Closed & clearance shortfall & 45 & 44.4\% & 0.34 & \phantom{0}5.22 / \phantom{0}7.89\% \\
PDM-Closed & logged-trajectory deviation & 104 & 37.5\% & 0.55 & \phantom{0}1.90 / \phantom{0}4.25\% \\
PDM-Closed & safety aggregate & 45 & 44.4\% & 0.33 & \phantom{0}9.81 / 14.74\% \\
PDM-Closed & scalar aggregate & 104 & 37.5\% & 0.35 & \phantom{0}8.26 / 12.68\% \\
IDM & collision & 1 & 0.0\% & 0.00 & \phantom{0}1.61 / \phantom{0}1.61\% \\
IDM & clearance shortfall & 5 & 40.0\% & 0.31 & \phantom{0}0.79 / \phantom{0}1.13\% \\
IDM & logged-trajectory deviation & 30 & 36.7\% & 0.76 & \phantom{0}0.14 / \phantom{0}0.56\% \\
IDM & safety aggregate & 5 & 40.0\% & 0.06 & \phantom{0}1.42 / \phantom{0}1.50\% \\
IDM & scalar aggregate & 30 & 33.3\% & 0.14 & \phantom{0}1.09 / \phantom{0}1.26\% \\
\midrule
\multicolumn{6}{l}{\emph{KITTI detection profile}} \\
PDM-Closed & safety aggregate & 70 & 37.1\% & 0.30 & 14.71 / 21.12\% \\
PDM-Closed & scalar aggregate & 175 & 30.3\% & 0.30 & 12.58 / 17.90\% \\
IDM & safety aggregate & 47 & 27.7\% & 0.22 & 16.82 / 21.58\% \\
IDM & scalar aggregate & 105 & 33.3\% & 0.23 & 14.00 / 18.08\% \\
\bottomrule
\end{tabular}
\end{table}

The cells are not independent: the safety aggregate contains collision and clearance, and the scalar aggregate contains safety. We read them as two planners under a family of related losses rather than ten replications. Under the detection profile, where both planners respond on enough states, their rankings on commonly affected states have Spearman correlation $0.62$--$0.81$ and Goodman--Kruskal $\gamma$ of $+0.49$ to $+0.69$; under real detector outcomes IDM responds on too few states for this comparison.

\section{Tie-Breaking and Lift Conventions}
\label{app:ties}

\paragraph{Ties in top-$k$ selection.}
Selecting the top $k$ of a discrete-valued signal leaves large tie groups, and how they are broken can dominate the result. At a 20\% budget over 3{,}376 nuScenes frames, only 326 frames lie strictly above the cut value of the false-negative gain while 366 share it, so 52\% of the selected set is decided by the tie-break rather than by the signal. The effect is larger in the score-free comparison, where the braking controller has zero decision value on most frames: under monocular geometry on the 2{,}655 frames with a logged future, 74\% of its top-20\% set and 83\% of its top-30\% set are tied at zero. Breaking ties by row order makes two independent systems select the same early frames and manufactures agreement between them. The cross-target analyses of Appendix~\ref{app:planning} therefore break ties with a random key, using a different seed per system, averaged over eight draws, and the benchmark evaluates each selection in exact expectation over random tie-breaks, which agrees with 20{,}000 sampled orders to within Monte Carlo error. Continuous signals --- the risk-weighted gain, uncertainty, PKL and TIP --- have tie groups of size one and are unaffected. Agreement between independent implementations does not test a tie-breaking convention they share: two implementations of the metric agree to three decimal places whenever both use a stable sort, so the convention is specified here rather than inferred from that agreement.

\paragraph{A tie-invariant version of the transfer test.}
Random tie-breaking removes the row-order artifact but does not show that \emph{no} optimal allocation for one target serves the other. Let $\Omega_{q_1}(k)$ be the set of all allocations of size at most $k$ that attain the optimum of Eq.~(\ref{eq:oracle}) for $q_1$, and define the most favorable member for $q_2$,
\begin{equation}
\mathcal{A}^{*}_{q_1\to q_2}(k)=\arg\max_{\mathcal{A}\in\Omega_{q_1}(k)}\ \sum_{i\in\mathcal{A}}V_i^{q_2},
\end{equation}
which gives the transfer every advantage. The two directions behave differently. The planner's optimum is nearly determined, and even its most favorable completion reaches $\ndg_{q_{\mathrm{brake}}}=0.067$ at a 20\% budget under oracle geometry and $0.227$ under monocular. The braking controller's optimum is heavily under-determined --- 111 responsive frames among the 2{,}655 with a logged future against a 531-frame budget under oracle geometry --- and its most favorable completion reaches $\ndg_{q_{\mathrm{plan}}}=0.930$ and $0.896$. We report both, and rest the cross-system comparison of Section~\ref{sec:sign} on per-frame sign disagreement, on the pairwise statistic below and on the binding direction, rather than on a symmetric statement.

\paragraph{Robustness of the pairwise result.}
The share of strictly ranked frame pairs that the two systems order differently is stable across geometries: $0.510$, with Goodman--Kruskal $\gamma=-0.021$, under oracle geometry and $0.502$, with $\gamma=-0.003$, under monocular geometry.

\paragraph{Box centers in the monocular lift.} The lift reports the distance to the box center, as the nuScenes submission format and the published planning-aware metrics expect. The convention matters for those metrics, which consume the 3D submission: scoring instead from the near face of each box would place every object about half an object length closer, 2.3\,m for a car and 5.6\,m for a bus.

\section{Harm Mechanism}
\label{app:mech}

Table~\ref{tab:mech} gives the per-frame detection statistics behind the mechanism of Section~\ref{sec:sign}. Escalation recovers missed objects and adds false positives at once. Under $q_{\mathrm{brake}}$, harmed frames gain $+1.980$ false positives against $+1.176$ on helped frames while recovering only $0.600$ misses against $1.396$; on 56.0\% of harmed frames escalation recovers no miss at all, introducing only spurious detections for a collision-averse controller to react to. The same ordering holds for the learned planner. Detection-level summaries such as mAP aggregate these effects over the dataset, and per-input planning-aware scores such as PKL and TIP value them through their own planner objectives; neither measures whether the exchange improves the target downstream decision on that frame.

\begin{table}[htbp]
\caption{Mean change in detection errors from cheap to full mode ($\Delta = $ full $-$ cheap), by the sign of decision value. Harmed frames add more false positives and recover fewer misses.}
\label{tab:mech}
\centering
\small
\begin{tabular}{lrrrrrr}
\toprule
& \multicolumn{3}{c}{nuScenes, $q_{\mathrm{brake}}$} & \multicolumn{3}{c}{nuScenes, $q_{\mathrm{plan}}^{\,\mathrm{self}}$} \\
\cmidrule(lr){2-4}\cmidrule(lr){5-7}
& helped & harmed & unaffected & helped & harmed & unaffected \\
\midrule
$\Delta$ false positives & $+1.176$ & $+1.980$ & $+0.849$ & $+1.093$ & $+1.251$ & $+0.671$ \\
$\Delta$ false negatives & $-1.396$ & $-0.600$ & $-0.790$ & $-1.321$ & $-1.014$ & $-0.522$ \\
$\Delta$ detections & $+2.505$ & $+2.480$ & $+1.581$ & $+2.315$ & $+2.190$ & $+1.154$ \\
\midrule
frames with an added FP & 61.5\% & 78.0\% & --- & 60.0\% & 64.2\% & --- \\
frames recovering no miss & 25.3\% & 56.0\% & --- & 31.0\% & 44.6\% & --- \\
\bottomrule
\end{tabular}
\end{table}

Figure~\ref{fig:bev} shows where these changes fall relative to the controller's corridor, and Figures~\ref{fig:gallery-main} and~\ref{fig:gallery} show examples of both mechanisms in camera images.

\begin{figure}[htbp]
\centering
\includegraphics{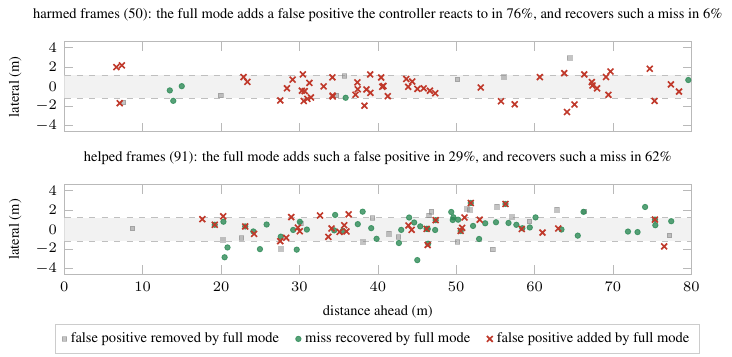}
\caption{\textbf{Where escalation changes what the braking controller sees.} Bird's-eye view of every object the controller considers --- inside its corridor (shaded) and within 80\,m --- that differs between the cheap and full modes, on the nuScenes frames with non-zero decision value under oracle geometry whose corridor objects differ between the modes. Harmed frames are dominated by false positives that the full mode adds; helped frames by misses that it recovers.}
\label{fig:bev}
\end{figure}

Table~\ref{tab:signagree} tests directly whether a perception-level reward, of the kind used by selective offloading, determines the sign of decision value. It does not. Spearman correlations between perception gain and decision value on affected frames are at most $0.11$, among affected frames with positive perception gain, 16--24\% are harmed under braking and about half under the planner, and for the planner the sign disagreement is close to 50\% for all four perception gains. On the 79 frames with non-zero decision value under both systems, 40 have opposite signs, so the same perception transition helps one downstream system and harms the other. Figure~\ref{fig:sign} shows the per-frame joint distribution for one perception gain.

\begin{figure}[htbp]
\centering
\includegraphics{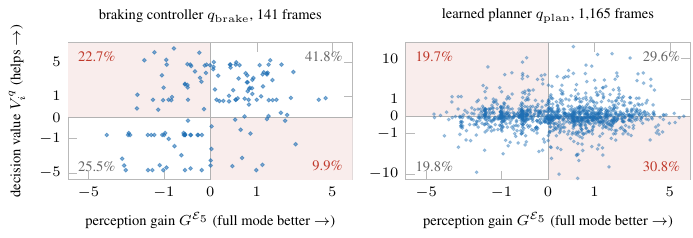}
\caption{\textbf{Perception gain does not determine decision value.} Each point is a nuScenes frame with non-zero perception gain and decision value (YOLOv8s 320$\to$640, oracle geometry), on $\operatorname{asinh}$ axes. Shaded quadrants mark sign disagreement: 32.6\% of frames for the braking controller and 50.6\% for the learned planner.}
\label{fig:sign}
\end{figure}

\begin{table}[htbp]
\caption{Perception-level gains do not fix the sign of decision value. nuScenes, YOLOv8s 320$\to$640, oracle geometry, shared threshold, frames with non-zero decision value. Corr.: Spearman correlation between gain and decision value. $P(\mathrm{harm}{\mid}G{>}0)$: share of frames with positive perception gain that escalation harms. Disagree: share of frames with non-zero gain and non-zero decision value whose signs differ. Per-mode thresholds are in Appendix~\ref{app:calib}.}
\label{tab:signagree}
\centering
\small
\setlength{\tabcolsep}{4pt}
\begin{tabular}{lrrrrrr}
\toprule
& \multicolumn{3}{c}{$q_{\mathrm{brake}}$ (141 frames)} & \multicolumn{3}{c}{$q_{\mathrm{plan}}$ (1{,}179 frames)} \\
\cmidrule(lr){2-4}\cmidrule(lr){5-7}
Perception gain & Corr. & $P(\mathrm{harm}{\mid}G{>}0)$ & Disagree & Corr. & $P(\mathrm{harm}{\mid}G{>}0)$ & Disagree \\
\midrule
Exact (false-negative count) & $+0.08$ & 24.4\% & 25.0\% & $+0.02$ & 50.6\% & 50.4\% \\
False negatives + false positives & $+0.07$ & 15.7\% & 31.9\% & $+0.02$ & 50.3\% & 49.8\% \\
Combined error ($\mathcal{E}_5$) & $+0.08$ & 19.2\% & 32.6\% & $+0.01$ & 51.0\% & 50.6\% \\
Risk-weighted ($\mathcal{E}_{\mathrm{risk}}$) & $+0.11$ & 23.8\% & 26.8\% & $+0.01$ & 50.5\% & 51.0\% \\
\bottomrule
\end{tabular}
\end{table}

\begin{figure}[t]
\centering
\setlength{\tabcolsep}{1.5pt}
\renewcommand{\arraystretch}{0.6}
\begin{tabular}{@{}cc@{\hspace{7pt}}cc@{}}
\scriptsize harmed, cheap 320\,px & \scriptsize harmed, full 640\,px & \scriptsize helped, cheap 320\,px & \scriptsize helped, full 640\,px \\
\includegraphics[width=0.236\linewidth]{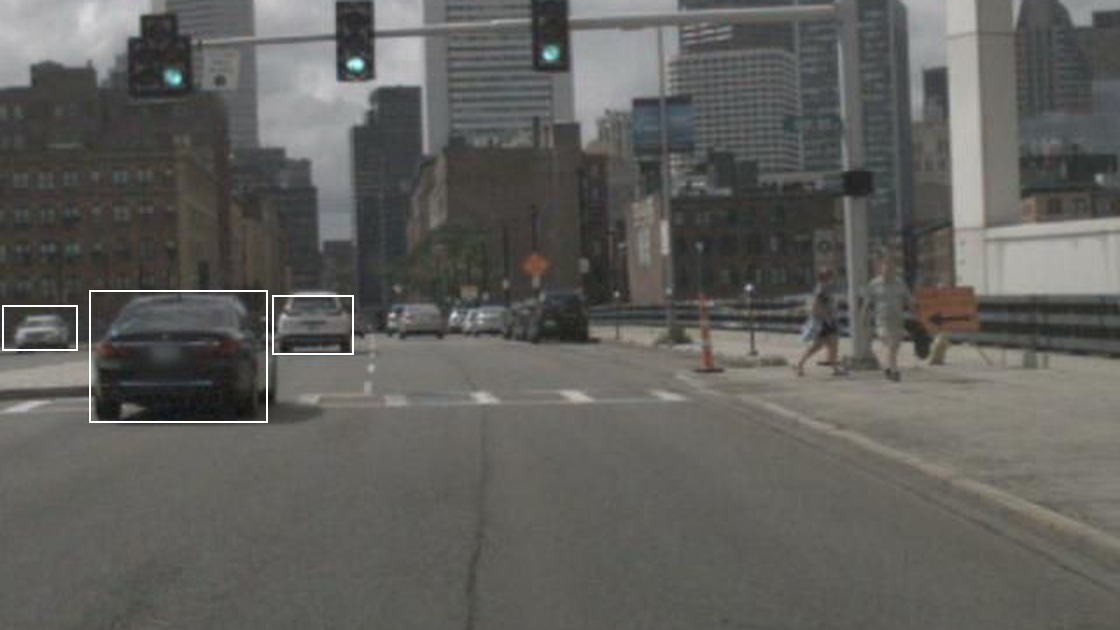} &
\includegraphics[width=0.236\linewidth]{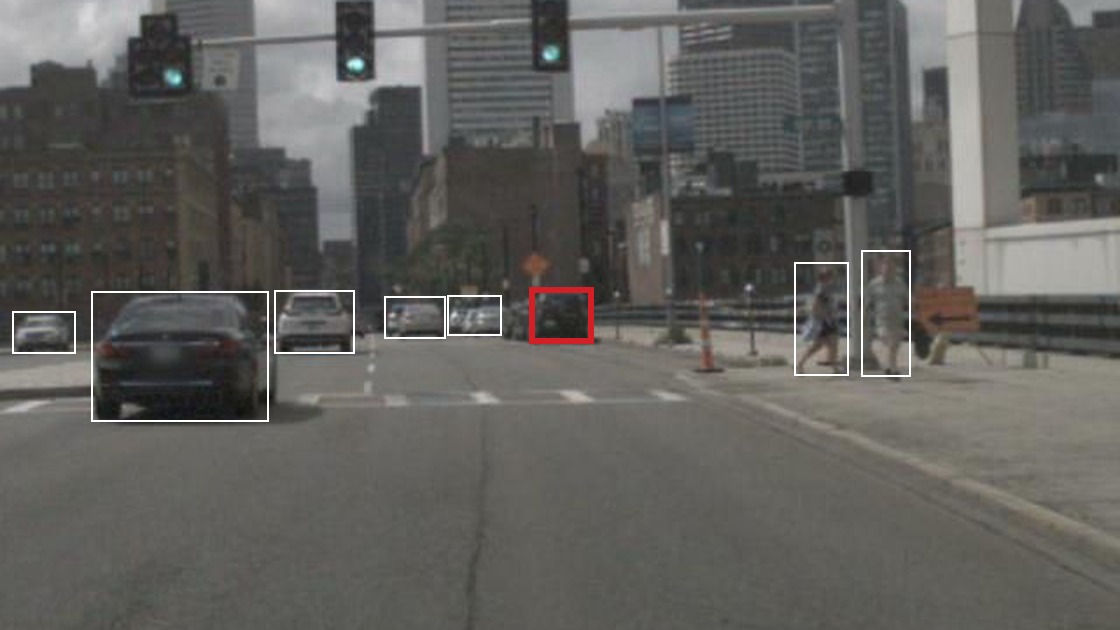} &
\includegraphics[width=0.236\linewidth]{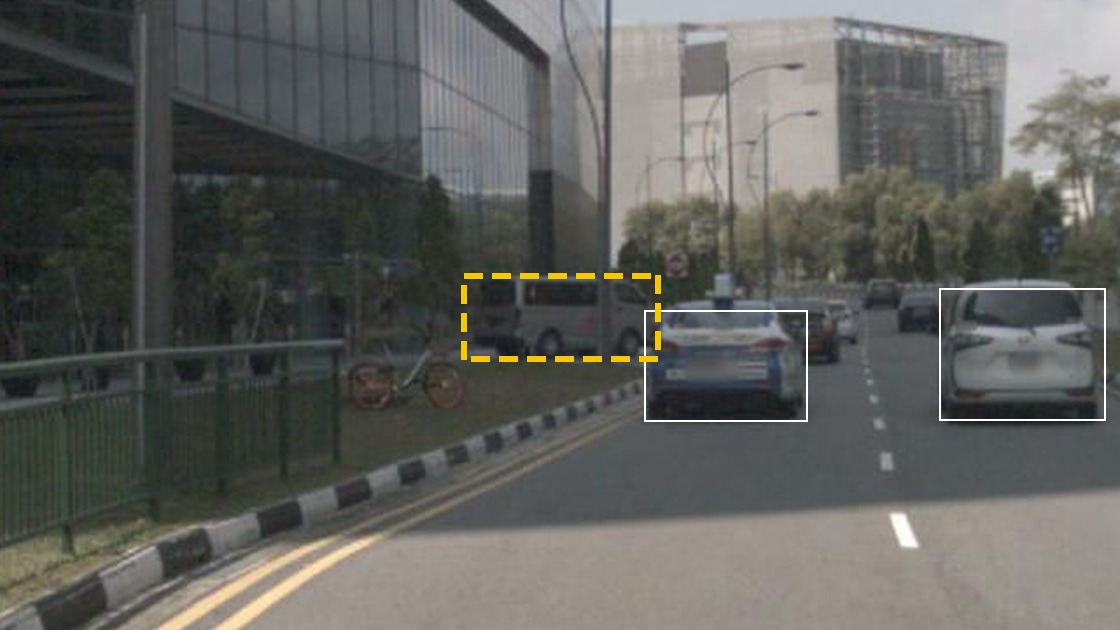} &
\includegraphics[width=0.236\linewidth]{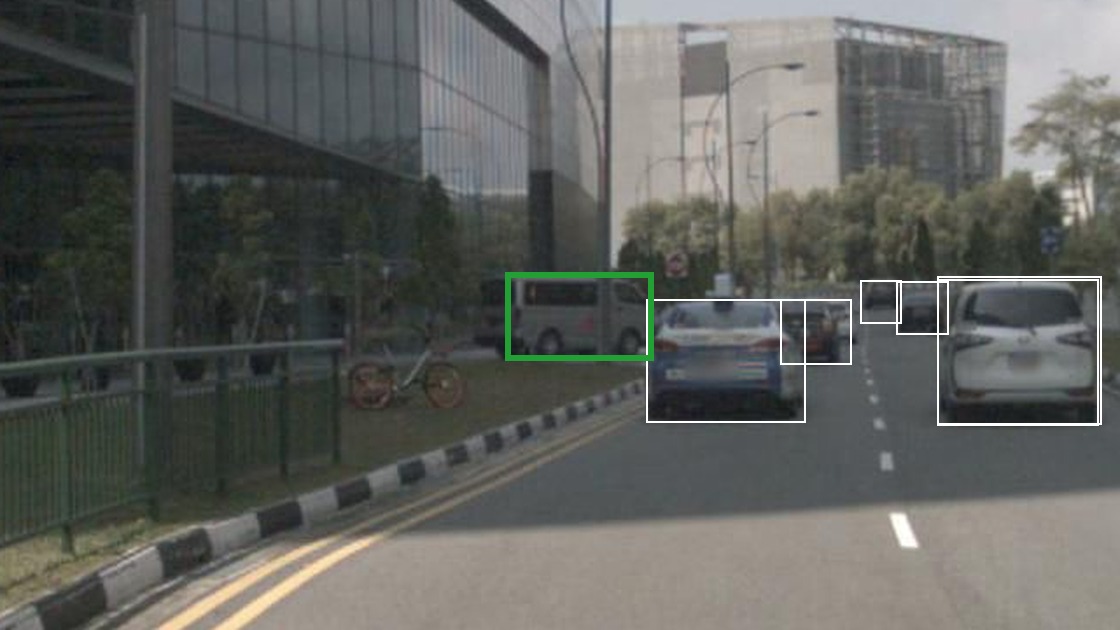} \\
\multicolumn{2}{c}{\scriptsize\color{cHarm} $V_i^q=-4.44$: keep $\to$ hard brake} &
\multicolumn{2}{c}{\scriptsize\color{cHelp} $V_i^q=+6.19$: keep $\to$ hard brake} \\[3pt]
\includegraphics[width=0.236\linewidth]{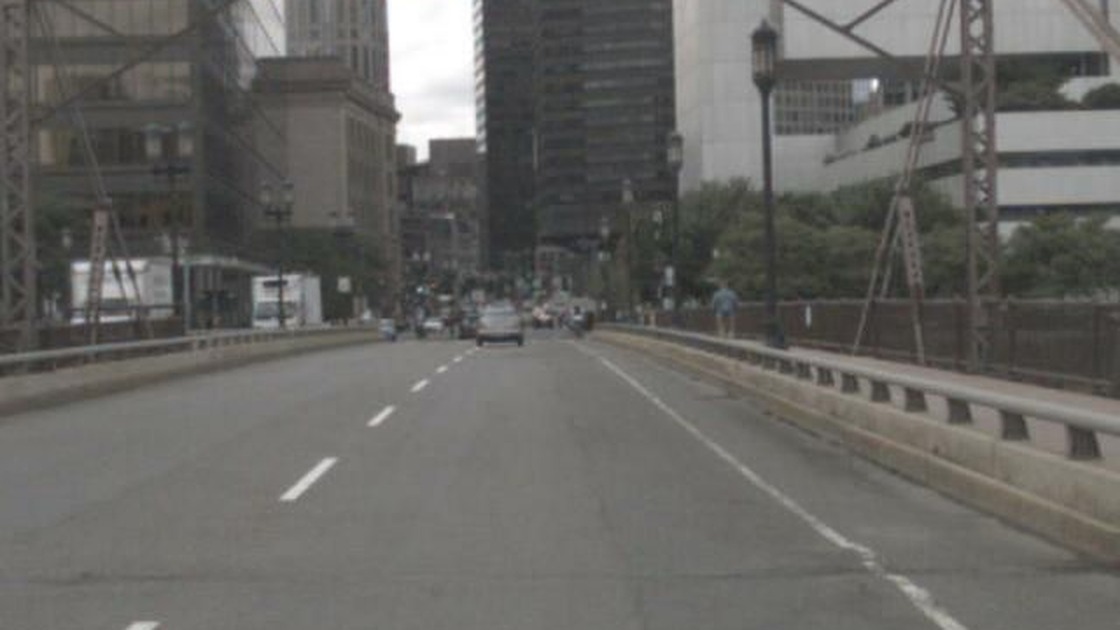} &
\includegraphics[width=0.236\linewidth]{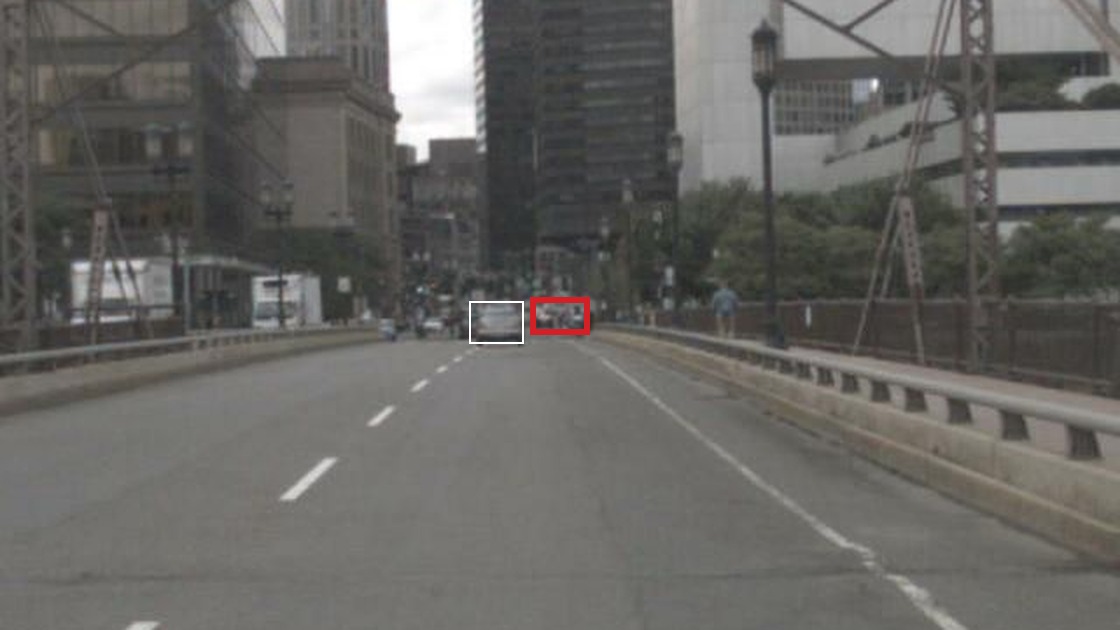} &
\includegraphics[width=0.236\linewidth]{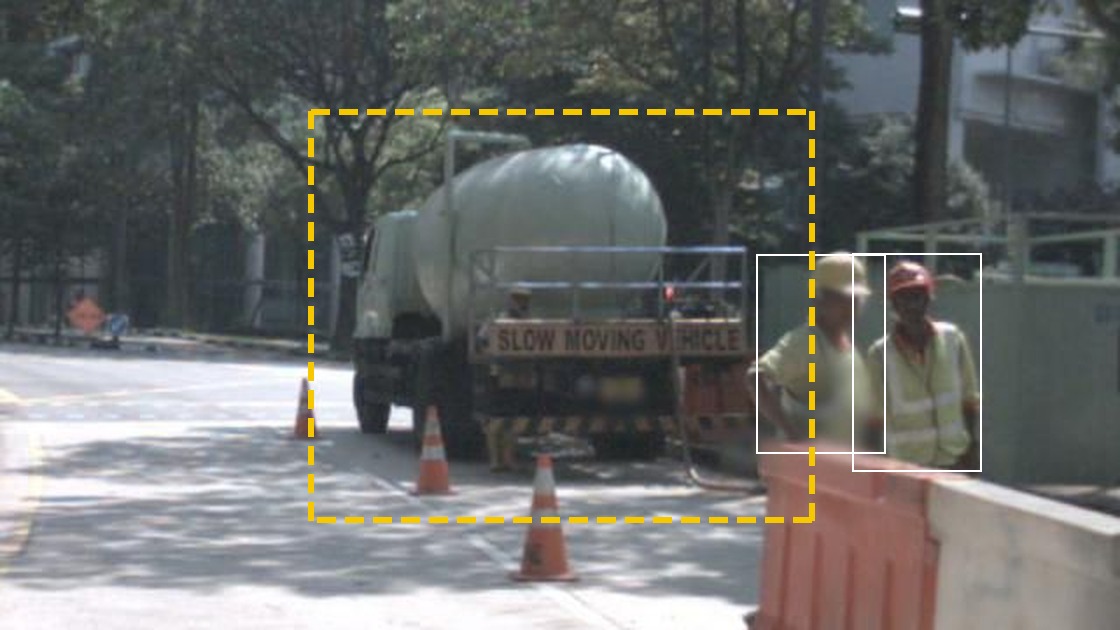} &
\includegraphics[width=0.236\linewidth]{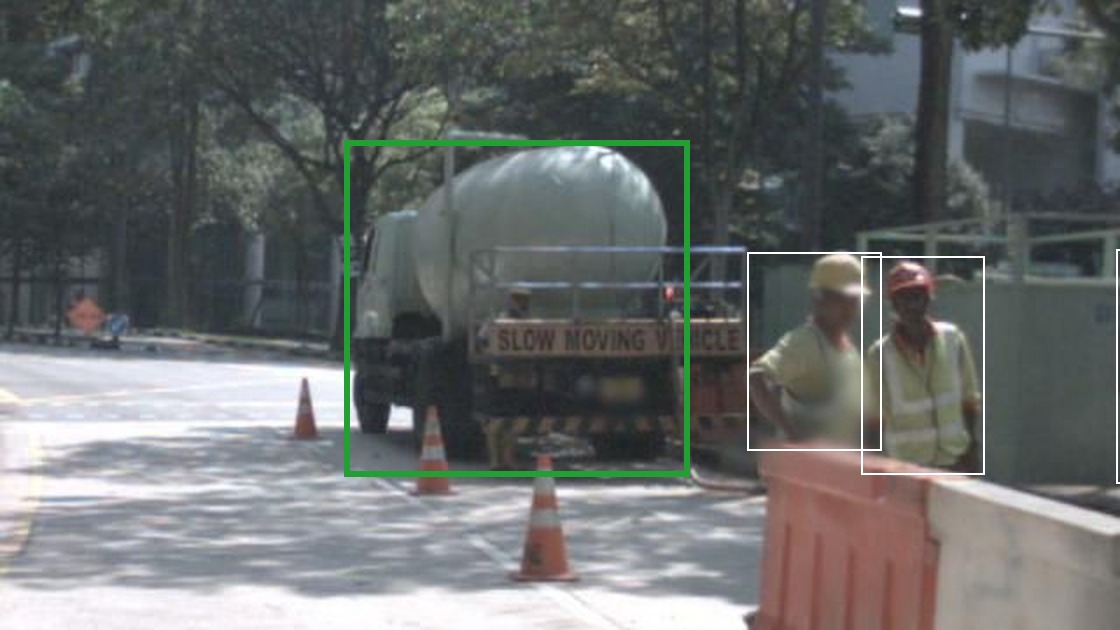} \\
\multicolumn{2}{c}{\scriptsize\color{cHarm} $V_i^q=-4.44$: keep $\to$ hard brake} &
\multicolumn{2}{c}{\scriptsize\color{cHelp} $V_i^q=+4.95$: keep $\to$ decelerate} \\
\end{tabular}
\caption{\textbf{Escalation adds detections the controller reacts to, and recovers the objects it needs.} Braking controller on nuScenes, monocular geometry; each pair is cropped around the object that decides the braking action. Left pairs: harmed frames in which the full mode adds a detection with no reference object (red) that triggers hard braking. Right pairs: helped frames in which the full mode detects an object (green) that the cheap mode misses (dashed yellow). Thin white boxes are the other detections. More examples and the selection rule are in Figure~\ref{fig:gallery}.}
\label{fig:gallery-main}
\end{figure}

\begin{figure}[!t]
\centering
\setlength{\tabcolsep}{1.5pt}
\renewcommand{\arraystretch}{0.6}
\begin{tabular}{@{}cc@{\hspace{7pt}}cc@{}}
\scriptsize harmed, cheap 320\,px & \scriptsize harmed, full 640\,px & \scriptsize helped, cheap 320\,px & \scriptsize helped, full 640\,px \\
\includegraphics[width=0.236\linewidth]{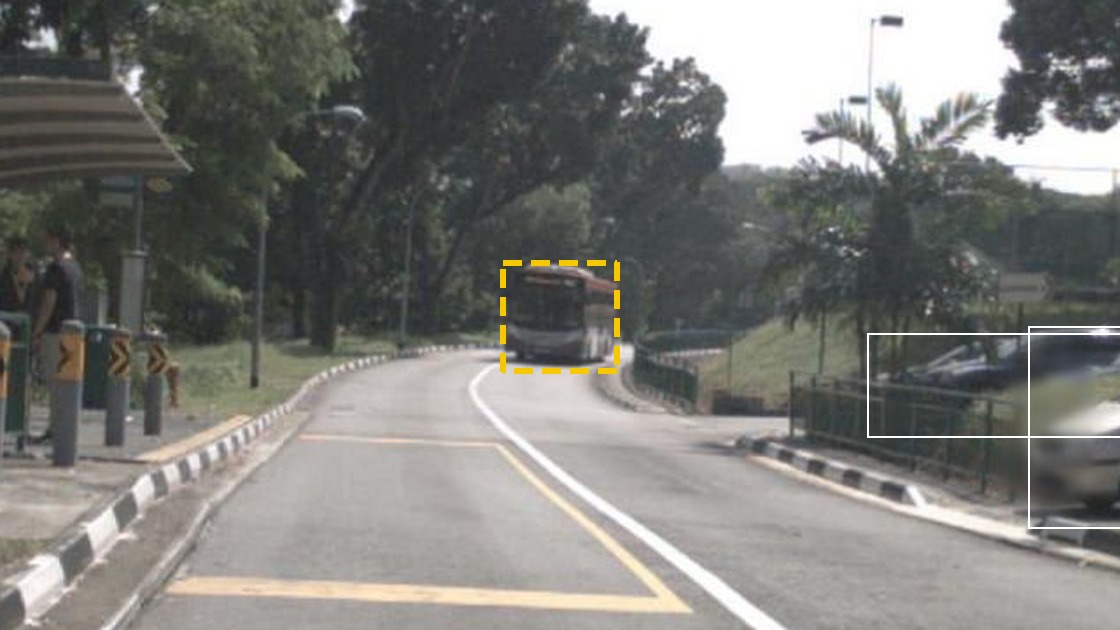} &
\includegraphics[width=0.236\linewidth]{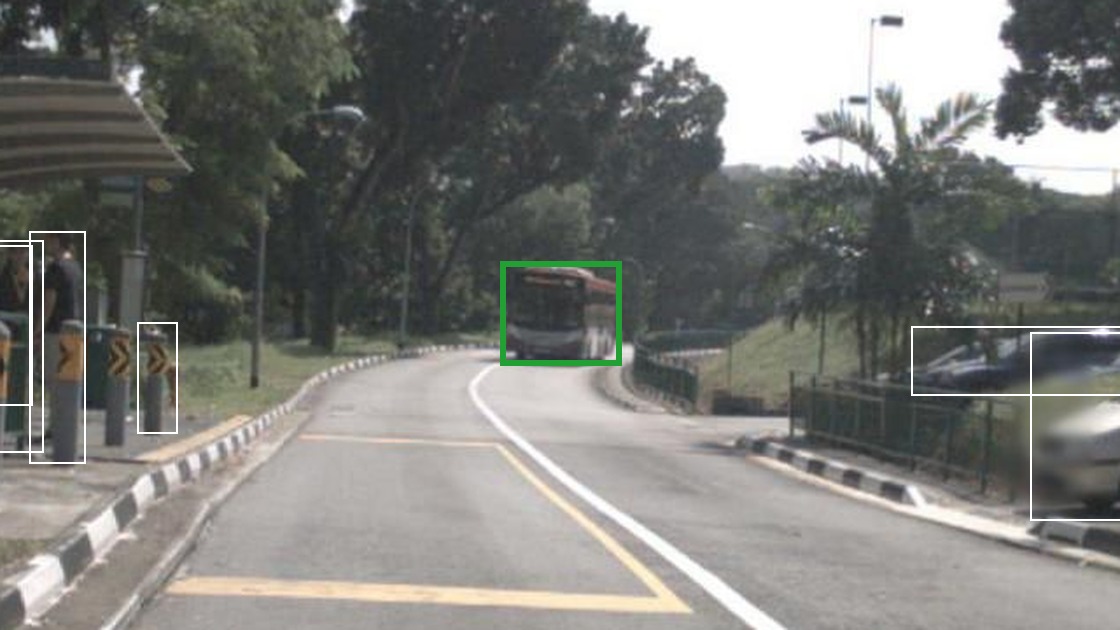} &
\includegraphics[width=0.236\linewidth]{figs/gallery_ego/pos3_cheap.jpg} &
\includegraphics[width=0.236\linewidth]{figs/gallery_ego/pos3_full.jpg} \\
\multicolumn{2}{c}{\scriptsize\color{cHarm} $V_i^q=-4.44$: keep $\to$ hard brake} &
\multicolumn{2}{c}{\scriptsize\color{cHelp} $V_i^q=+6.19$: keep $\to$ hard brake} \\[3pt]
\includegraphics[width=0.236\linewidth]{figs/gallery_ego/neg2_cheap.jpg} &
\includegraphics[width=0.236\linewidth]{figs/gallery_ego/neg2_full.jpg} &
\includegraphics[width=0.236\linewidth]{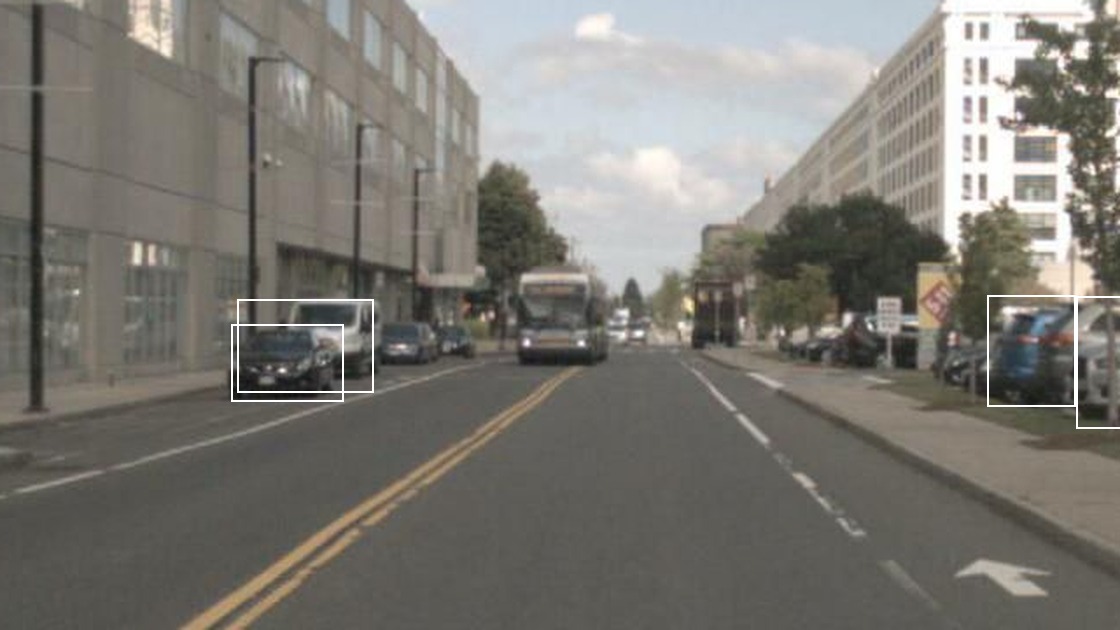} &
\includegraphics[width=0.236\linewidth]{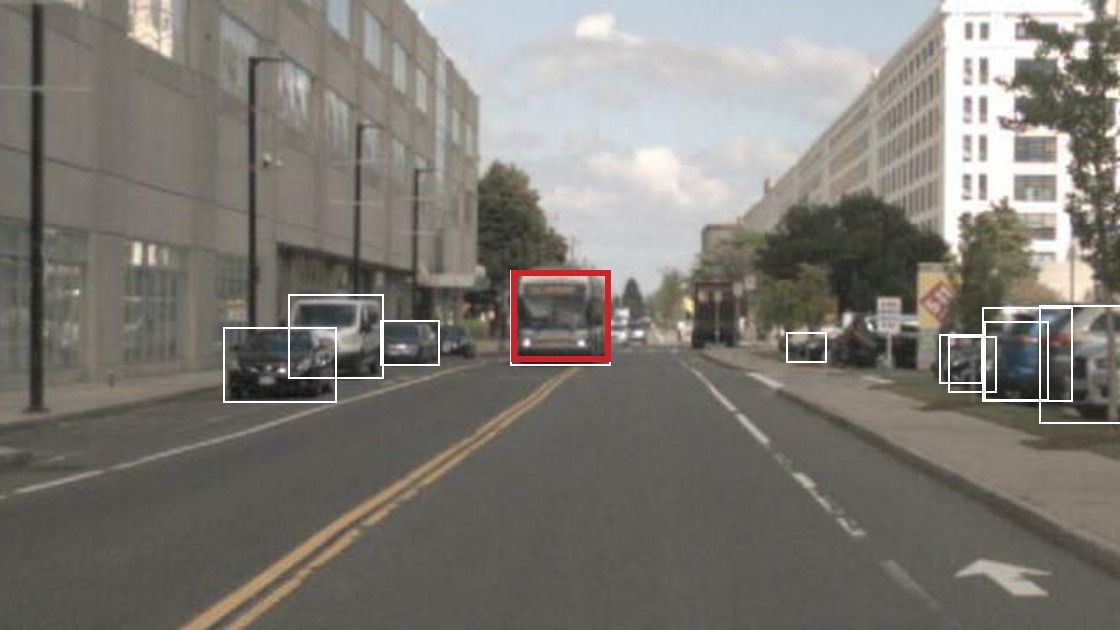} \\
\multicolumn{2}{c}{\scriptsize\color{cHarm} $V_i^q=-4.44$: keep $\to$ hard brake} &
\multicolumn{2}{c}{\scriptsize\color{cHelp} $V_i^q=+5.35$: keep $\to$ decelerate} \\[3pt]
\includegraphics[width=0.236\linewidth]{figs/gallery_ego/neg3_cheap.jpg} &
\includegraphics[width=0.236\linewidth]{figs/gallery_ego/neg3_full.jpg} &
\includegraphics[width=0.236\linewidth]{figs/gallery_ego/pos5_cheap.jpg} &
\includegraphics[width=0.236\linewidth]{figs/gallery_ego/pos5_full.jpg} \\
\multicolumn{2}{c}{\scriptsize\color{cHarm} $V_i^q=-4.44$: keep $\to$ hard brake} &
\multicolumn{2}{c}{\scriptsize\color{cHelp} $V_i^q=+4.95$: keep $\to$ decelerate} \\[3pt]
\includegraphics[width=0.236\linewidth]{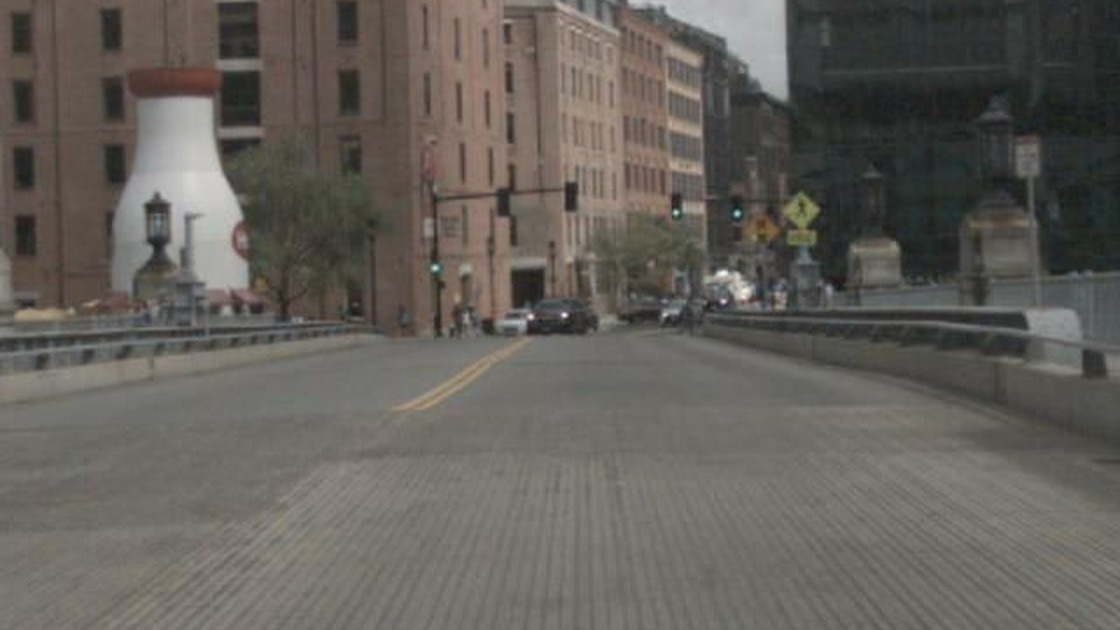} &
\includegraphics[width=0.236\linewidth]{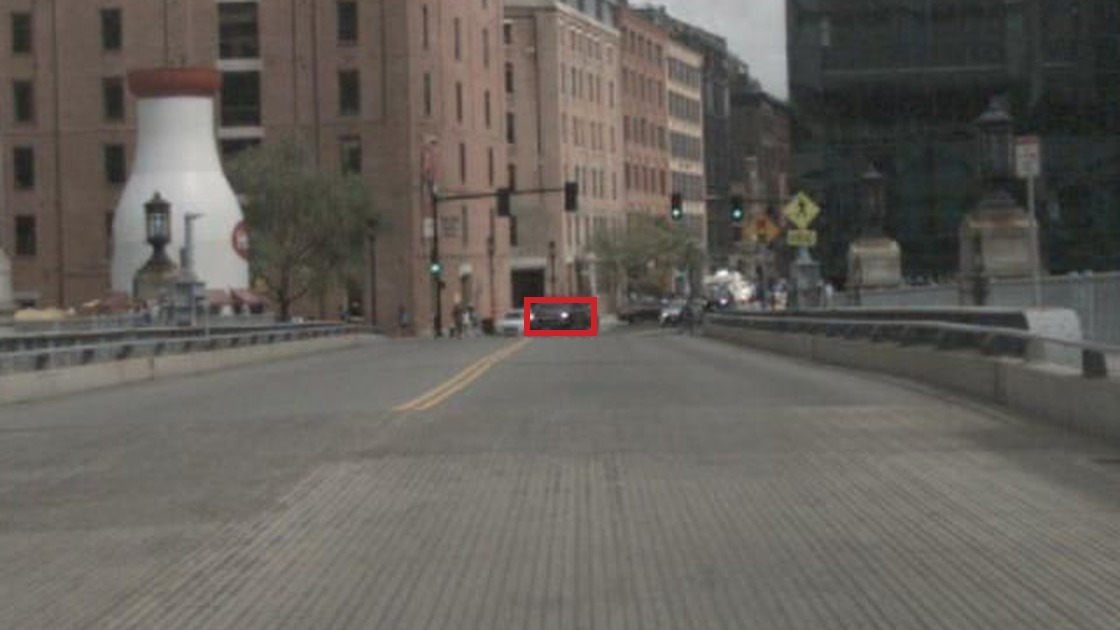} &
\includegraphics[width=0.236\linewidth]{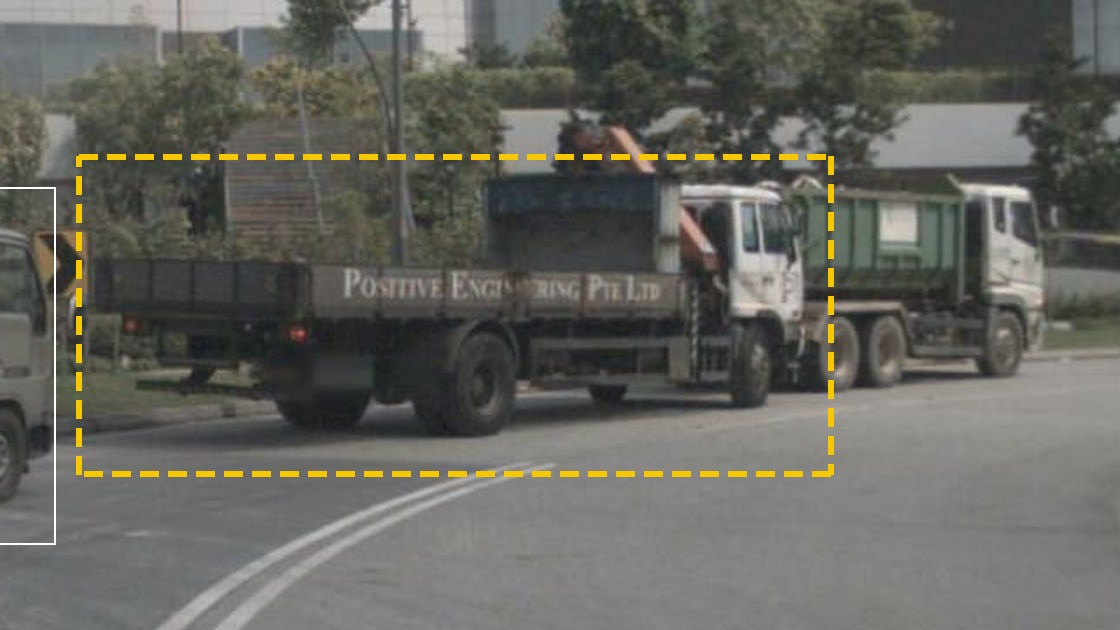} &
\includegraphics[width=0.236\linewidth]{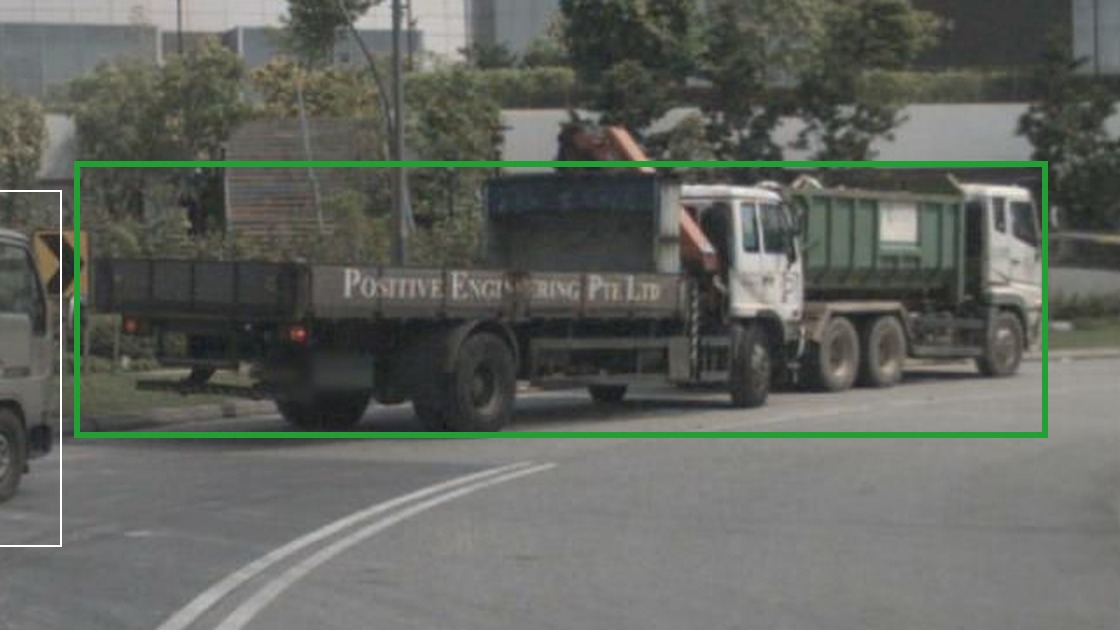} \\
\multicolumn{2}{c}{\scriptsize\color{cHarm} $V_i^q=-4.44$: keep $\to$ hard brake} &
\multicolumn{2}{c}{\scriptsize\color{cHelp} $V_i^q=+4.52$: keep $\to$ decelerate} \\
\end{tabular}
\caption{\textbf{Qualitative examples under the braking controller} (nuScenes, monocular geometry), extending Figure~\ref{fig:gallery-main}; colors as there, each pair cropped around the object that decides the braking action. The examples are the frames among the six most harmed and six most helped, at most one per scene, whose decisive change is an added detection or a recovered object; in the other four, escalation changes the estimated range of an object that both modes detect. The first harmed frame is harmed by a recovered object: the full mode detects a vehicle that the cheap mode misses, and the monocular lift places it inside the controller's corridor although its reference box lies outside it. In the second helped frame, the full mode's box around the bus overlaps the reference box by less than the matching threshold and is therefore drawn as an added detection.}
\label{fig:gallery}
\end{figure}

\section{Robustness Across Perception and Downstream Settings}
\label{app:robust}

Ego speed, a causal ego-motion signal with no perception content, separates from random on no held-out nuScenes cell at a 20\% budget (Table~\ref{tab:heldout}), so ego speed alone does not explain the perception-level results on nuScenes. The signal is causal on every track: a backward difference over the previous 0.5\,s on nuScenes, where a scene's first frame has no predecessor and takes zero, the OXTS forward velocity of the current frame on KITTI, and the current ego state's velocity on nuPlan. Ego speed is far stronger in a KITTI longitudinal setting, where braking demand scales as $v^2/(2g)$, so both the loss and the room to reduce it grow with speed; trivial baselines must therefore be reported per setting rather than dismissed. Across the 16 configurations below, false-negative gain never exceeds $\ndg=0.337$ and is negative in six.

\begin{table}[htbp]
\caption{nDG of the false-negative gain (the reduction in missed objects) and of a multi-metric diagnostic at a 20\% budget, across 16 configurations. The diagnostic is an out-of-fold gradient-boosted model over perception-error features computed from the reference state, so it is not an upper bound over perception signals: in four configurations it falls below the false-negative gain beside it. It exceeds $0.5$ only in the two aggressive-gap KITTI longitudinal cells and in nuScenes longitudinal under oracle geometry.}
\label{tab:robust}
\centering
\scriptsize
\begin{tabular}{llrr}
\toprule
Configuration & Task & nDG of $G^{\mathcal{E}_{\mathrm{FN}}}$ & Multi-metric diagnostic \\
\midrule
KITTI/Y8 320$\to$640 mono & longitudinal & 0.162 & 0.672 \\
KITTI/Y8 320$\to$640 mono & lateral & $-0.230$ & $-0.118$ \\
KITTI/Y8 320$\to$640 oracle & longitudinal & 0.200 & 0.810 \\
KITTI/Y8 320$\to$640 oracle & lateral & $-1.041$ & $-0.231$ \\
KITTI/Y8 384$\to$640 mono & longitudinal & 0.152 & 0.388 \\
KITTI/Y8 384$\to$640 mono & lateral & $-0.297$ & $-0.273$ \\
KITTI/Y8 512$\to$640 mono & longitudinal & 0.171 & 0.213 \\
KITTI/Y8 512$\to$640 mono & lateral & $-0.283$ & $-0.764$ \\
KITTI/RT 320$\to$640 mono & longitudinal & 0.074 & 0.153 \\
KITTI/RT 320$\to$640 mono & lateral & $-0.448$ & $-0.206$ \\
KITTI/RT 480$\to$640 mono & longitudinal & 0.104 & $-0.005$ \\
KITTI/RT 480$\to$640 mono & lateral & $-0.091$ & 0.138 \\
nuScenes/Y8 320$\to$640 mono & longitudinal & 0.053 & 0.350 \\
nuScenes/Y8 320$\to$640 mono & lateral & 0.337 & 0.245 \\
nuScenes/Y8 320$\to$640 oracle & longitudinal & 0.171 & 0.556 \\
nuScenes/Y8 320$\to$640 oracle & lateral & 0.266 & $-0.261$ \\
\bottomrule
\end{tabular}
\end{table}

\section{Downstream Systems and Hardware Details}
\label{app:hw}

\paragraph{Braking controller.} $q_{\mathrm{brake}}$ computes the deceleration $a^{\mathrm{req}}$ needed to stop 2\,m behind the most constraining obstacle that overlaps a corridor of half-width 1.2\,m within 80\,m, allowing 0.4\,s of reaction time and taking the larger of the requirements at ego speed and at closing speed, capped at 9\,m/s$^2$. It keeps speed for $a^{\mathrm{req}}<1.0$, decelerates at 2.5\,m/s$^2$ for $1.0\le a^{\mathrm{req}}<3.5$ and brakes at 6.0\,m/s$^2$ above. With commanded deceleration $a$, previous command $a_{\mathrm{prev}}$ and the requirement $a^{\star}$ computed from the reference scene, the loss is
\begin{equation}
\mathcal{L}_{\mathrm{brake}}=(a^{\star}-a)_+^2+0.12\,(a-a^{\star})_+^2+0.02\,|a-a_{\mathrm{prev}}|+6\cdot\mathbf{1}\bigl[(a^{\star}-a)_+\ge 8\bigr],
\end{equation}
whose terms are the safety shortfall, over-braking, a change of command, and a shortfall large enough to count as a collision.

\paragraph{Receding-horizon controller.} $q_{\mathrm{traj}}$ evaluates seven longitudinal accelerations $a$ from 0 to $-8$\,m/s$^2$ crossed with five lateral offsets $d$ from $-3$ to $3$\,m over a horizon $T=3$\,s with a 0.25\,s rollout step, and selects the candidate of least cost under perceived geometry, with obstacles treated as stationary. The selected candidate is re-simulated against the reference obstacles, and its loss is
\begin{equation}
\mathcal{L}_{\mathrm{traj}}=10\cdot\mathbf{1}[\mathrm{collision}]+1.5\,c^2+\Delta_s^2+0.01\,a^2T+0.5\,d^2+0.05\bigl((a-a_{\mathrm{prev}})^2/4+(d-d_{\mathrm{prev}})^2\bigr),
\end{equation}
where $c$ is the shortfall of the worst lateral clearance below 1\,m, clipped at 2\,m, and $\Delta_s$ the progress shortfall relative to traveling at $\max(v,2\,\mathrm{m/s})$ over the horizon, as a fraction of that distance.

\paragraph{Previous action.} Both controllers depend on the previous action through $a_{\mathrm{prev}}$ and $d_{\mathrm{prev}}$, which the receding-horizon controller also uses to plan. For decision values, both branches of input $i$ take the action of all-cheap operation at the previous frame of the sequence, and none at its first frame, so $V_i^q$ is the value of escalating input $i$ alone and does not depend on which other inputs are escalated. Under this convention the change terms carry at most 2.9\% of the total harm and 0.8\% of the total benefit in every braking and trajectory setting; harm to the braking controller is 73--96\% over-braking, and harm to the receding-horizon controller is almost entirely collision and clearance.

\paragraph{Loss and corridor sensitivity.} Table~\ref{tab:losssens} varies one parameter at a time: the braking controller's over-braking weight (0.06, 0.12, 0.24), collision penalty (3, 6, 12) and change-of-command weight (0, 0.02, 0.04), whose losses recombine exactly because its actions do not depend on them; its corridor half-width (1.0, 1.2, 1.5\,m), which changes its actions; and the receding-horizon controller's collision (5, 10, 20), clearance (0.75, 1.5, 3) and progress (0.5, 1, 2) weights, changed in planning and loss together. The variations were chosen after the main results. The harm rate stays at 33--55\% in every monocular-geometry setting, while $\rho_q$ moves more and rises with the over-braking weight, for instance from 0.28 to 0.62 on nuScenes braking under oracle geometry.

\begin{table}[htbp]
\caption{Harm rate (\%) and harm ratio $\rho_q$ across loss and corridor variants, as the range over variants that include the default, over all units. Braking weights: over-braking, collision and change-of-command weights; corridor: half-width 1.0--1.5\,m; trajectory weights: collision, clearance and progress weights in planning and loss.}
\label{tab:losssens}
\centering
\small
\setlength{\tabcolsep}{3.5pt}
\begin{tabular}{lrrrrrr}
\toprule
& \multicolumn{2}{c}{Braking weights} & \multicolumn{2}{c}{Braking corridor} & \multicolumn{2}{c}{Trajectory weights} \\
\cmidrule(lr){2-3}\cmidrule(lr){4-5}\cmidrule(lr){6-7}
Setting & Harmed & $\rho_q$ & Harmed & $\rho_q$ & Harmed & $\rho_q$ \\
\midrule
nuScenes, oracle & 35.5--36.2 & 0.28--0.62 & 35.5--36.5 & 0.33--0.46 & \multicolumn{2}{c}{---} \\
nuScenes, mono & 38.1--42.5 & 0.28--0.55 & 39.2--41.6 & 0.37--0.41 & \multicolumn{2}{c}{---} \\
KITTI Y8 320$\to$640, mono & 32.6--35.3 & 0.11--0.28 & 32.7--34.6 & 0.15--0.22 & 46.0--48.8 & 0.27--0.32 \\
KITTI Y8 384$\to$640, mono & 38.8--40.1 & 0.35--0.65 & 38.9--39.8 & 0.41--0.51 & 47.3--51.0 & 0.45--0.51 \\
KITTI Y8 512$\to$640, mono & 42.9--43.4 & 0.51--0.72 & 42.6--42.9 & 0.59--0.71 & 51.6--54.5 & 1.10--1.15 \\
KITTI RT 320$\to$640, mono & 43.4--44.3 & 0.61--0.79 & 44.0--44.8 & 0.67--0.81 & 47.1--49.7 & 0.74--0.79 \\
KITTI RT 480$\to$640, mono & 40.7--43.2 & 0.68--0.79 & 42.2--43.1 & 0.72--0.86 & 42.9--45.2 & 0.57--0.60 \\
KITTI Y8 320$\to$640, oracle & 23.2--25.0 & 0.04--0.15 & 22.2--26.3 & 0.07--0.10 & 2.7--3.7 & 0.00 \\
\bottomrule
\end{tabular}
\end{table}

\paragraph{Learned planner.} We use the released planner on which PKL is built, at its published weights. It emits one BEV occupancy heatmap per future timestep over 16 steps, and we read its decision as the argmax path. $q_{\mathrm{plan}}$ is the mean Euclidean displacement, in meters, between that path and the logged future ego trajectory from the dataset's ego poses. We do not claim the logged trajectory is uniquely optimal. Frames whose four-second horizon extends past the end of a scene have no logged future and are excluded, leaving 2{,}655 of 3{,}376. $q_{\mathrm{plan}}^{\,\mathrm{self}}$ replaces the logged trajectory with the planner's own reference-box-conditioned path. The learned planner trails a constant-velocity predictor on $q_{\mathrm{plan}}$ and is included for its link to PKL; the other downstream systems clear that viability check.

\begin{table}[htbp]
\caption{Jetson AGX Xavier FP16 TensorRT profiling of the KITTI detector modes on an idle board, one image per call. End-to-end time includes image decoding and preprocessing. Energy is per frame under the module convention, the GPU, CPU and SoC rails over idle.}
\centering
\small
\begin{tabular}{lrrr}
\toprule
Mode & GPU ms & End-to-end ms & Energy mJ \\
\midrule
YOLOv8s 320 & 5.09 & 13.18 & 41.1 \\
YOLOv8s 384 & 6.09 & 14.38 & 47.8 \\
YOLOv8s 512 & 7.89 & 16.72 & 62.0 \\
YOLOv8s 640 & 9.28 & 18.47 & 84.4 \\
RT-DETR-l 320 & 16.82 & 25.31 & 397 \\
RT-DETR-l 480 & 21.35 & 31.21 & 535 \\
RT-DETR-l 640 & 30.97 & 43.43 & 816 \\
\bottomrule
\end{tabular}
\end{table}

\paragraph{Pre-escalation gates.} Both gates regress $V_i^q$ on 65 cheap-side features in four groups: confidence statistics, uncertainty (entropy, margin and suppression ambiguity), scene complexity (counts, areas, overlaps, image and motion statistics), and criticality from the monocular geometry of the cheap detections; on nuPlan they use 18 analogous features of the cheap tracks and the ego state. The ridge gate standardizes its inputs and selects its penalty from 13 values between $10^{-2}$ and $10^{4}$ by leave-one-out cross-validation; the gradient-boosted gate uses histogram gradient boosting with depth 3, 200 iterations, learning rate 0.08 and at least 40 samples per leaf. Both are fit on the training and validation units of each track and scored once on its test units; the streaming thresholds of Appendix~\ref{app:bench} are refit on the training units and calibrated on the validation units.

\paragraph{Monocular geometry.} The deployed lift estimates each detection's range by fusing a ground-plane estimate from the bottom of its box with a class height prior, and takes lateral extent from the box edges; the exact formulas and constants are given in the released code. Geometry is reported in the ego frame. The camera-to-ego rotation enters the ground cue, whose horizon it corrects in both the cue and its fusion weight, and the translation moves the fused point and the bottom box corners to the ego origin; the calibration places the camera 1.08--1.14\,m ahead of the ego origin and 0.31--0.33\,m to its right on KITTI, and 1.70--1.72\,m ahead on nuScenes. Time to collision, which is read from the change in box height, is unaffected. Applying the full rigid transform to the fused point instead gives a larger median range error in every range band and fidelity checked. The lift overestimates range by 1.2--1.3\,m at 0--15\,m and by 1.8--2.4\,m at 15--30\,m, because the bottom edge of a loose box lands on the wheel contact rather than on the near face. The two fidelities share this bias, differing by at most 0.12\,m in every band up to 30\,m and by 1.0--2.6\,m beyond. The lift is not recalibrated against this bias, since that would tune it on the reference it is scored against. Oracle geometry gives each detection the reference box of highest image IoU above 0.5, not a one-to-one match, and inherits its range, lateral extent and time to collision; unmatched detections keep their monocular geometry.

\paragraph{Measured costs.} End to end on the board, the cheap and full pipelines cost 12.71 and 19.35\,ms on nuScenes and 13.18 and 18.47\,ms on KITTI; the gate's feature vector costs 3.52\,ms and ridge inference 0.41\,ms, the MLP detection-list routers 0.65\,ms in total, and the pixel router 1.55\,ms of TensorRT inference after 3.3--8.2\,ms of resizing and upload. On nuPlan front-camera images, the 320 and 640\,px modes cost 14.4 and 23.6\,ms, timed over preprocessing, inference and postprocessing without image decoding, and 69.5 and 143.8\,mJ per frame under the module convention; the nuScenes modes cost 54.6 and 132.7\,mJ. The gate features and models run on the CPU of the board; nuPlan has no separately timed feature pass, so its allocators are charged the nuScenes feature time.

\end{document}

%% file: math_commands.tex
\usepackage{amsmath,amsfonts,bm}

\def\eqref#1{equation~\ref{#1}}

\def\1{\bm{1}}

\DeclareMathAlphabet{\mathsfit}{\encodingdefault}{\sfdefault}{m}{sl}
\SetMathAlphabet{\mathsfit}{bold}{\encodingdefault}{\sfdefault}{bx}{n}